\newtheorem{lemma}{Lemma}
\newcommand{\captionfonts}{\normalsize}
\long\def\@makecaption#1#2{%
  \vskip\abovecaptionskip
  \sbox\@tempboxa{{\captionfonts #1: #2}}%
  \ifdim \wd\@tempboxa >\hsize
    {\captionfonts #1: #2\par}
  \else
    \hbox to\hsize{\hfil\box\@tempboxa\hfil}%
  \fi
  \vskip\belowcaptionskip}
\begin{document}
\hspace{13.9cm}1

\ \vspace{20mm}\\

\begin{center}
    {\LARGE Unsupervised Discovery, Control, and Disentanglement of Semantic Attributes with Applications to Anomaly Detection}
\end{center}

\ \\
{\bf \large William Paul$^{\displaystyle 1}$, I-Jeng Wang$^{\displaystyle 1}$, Fady Alajaji$^{\displaystyle 2}$, Philippe Burlina$^{\displaystyle 1, 3}$}\\
{$^{\displaystyle 1}$The Johns Hopkins University Applied Physics Laboratory\\
11100 Johns Hopkins Rd,
Laurel, MD 20723, USA\\
{\tt\small {firstname.lastname}@jhuapl.edu}}\\
{$^{\displaystyle 2}$Department of Mathematics and Statistics\\ Queens University, ON K7L 3N6, Canada \\
{\tt\small {fa}@queensu.ca}}\\
{$^{\displaystyle 3}$Department of Computer Science\\ Johns Hopkins University\\
3400 N. Charles Street
Baltimore, MD 21218\\}
{\bf Keywords:} Deep Learning, Image Synthesis, Representation Learning

\thispagestyle{empty}
\markboth{}{NC instructions}
\ \vspace{-0mm}\\
\begin{center} {\bf Abstract} \end{center}
Our work focuses 
on unsupervised and generative methods that address the following goals: 
(a) learning unsupervised generative representations that discover latent factors controlling image semantic attributes, (b) studying how this ability to control attributes formally relates to the issue of latent factor disentanglement, clarifying related but dissimilar concepts that had been confounded in the past, and (c) developing anomaly detection methods that leverage representations learned in (a). 

For (a) we propose a network architecture that exploits the combination of multiscale generative models with mutual information (MI) maximization.
For (b), we derive an analytical result (Lemma 1) that brings clarity to two related but distinct concepts: the ability of generative networks to control semantic attributes of images they generate, resulting from MI maximization, and the ability to disentangle latent space representations, obtained via total correlation minimization. More specifically, we demonstrate that maximizing semantic attribute control encourages disentanglement of latent factors.
Using Lemma 1 and adopting MI in our loss function, we then show empirically that, for image generation tasks, the proposed approach exhibits superior performance as measured in the quality and disentanglement trade space, when compared to other state of the art methods, with quality assessed via the Fr\'echet Inception Distance (FID), and disentanglement via mutual information gap. For (c), we design several systems for anomaly detection exploiting representations learned in (a), and  demonstrate their performance benefits when compared to state-of-the-art generative and discriminative algorithms.

The above contributions in representation learning have potential applications in addressing other important problems in computer vision, such as bias and privacy in AI. 

\section{Introduction}

\noindent
{\bf Motivations and Goals} 

There has been unsurpassed success in the application of deep learning (DL) in several areas of visual analysis, computer vision, natural language processing and medical imaging~\citep{litjens2017survey}. The transformative contribution of DL to artificial intelligence (AI), principally in discriminative models for supervised learning, has mostly hinged on the availability of large training datasets such as ImageNet \citep{ILSVRC15}. Open problems still remain in DL, especially in unsupervised learning,  inference on out-of-training-distribution test samples, domain shift, and also in discriminative tasks where data is not easily obtained or when manual labeling is impractical or prohibitively onerous. Generative models -- with their ability to generate data similar to a given dataset and efficient representations of this data -- may assist in addressing some of these challenges. 

Considering these challenges, the ability to generate images with good quality and diversity at high resolution and to allow the unsupervised discovery and control of individual semantic images attributes via latent space factors are of paramount importance, and are the main aims motivating our study. These goals are also extended here to applying these learned representations to the practical task of anomaly detection.

Generative methods -- broadly speaking -- learn to sample from the underlying training data distribution so as to generate novel, fake samples that are visually or statistically indistinguishable from the underlying training data. A simple taxonomy of generative models includes: generative adversarial networks (GANs) \citep[see, e.g.,][]{goodfellow2014generative, salimans2016improved}, autoencoders/variational autoencoders (VAEs) \citep{kingma2013auto}, and, used to a lesser extent in comparison to the aforementioned ones, generative autoregressive models in~\citep{oord2016wavenet}, invertible flow based latent vector models in~\citep{kingma2018glow}, or hybrids of the above models as in~\citep{grover2018flow}.

\medskip\noindent
{\bf Prior Work}

Prior and recent research in generative models addressing some of our aims and inspiring %and limitations motivating 
our work are organized below along the following areas:

\medskip
\noindent
{\bf \em High Resolution GAN Approaches} 

A goal of GANs has been to achieve good quality image generation for high resolution images. Despite sustained research in GANs, only relatively recently have GANs been able to generate images at somewhat high resolutions (greater than 256 by 256 pixels). Examples of methods that achieve such results include ProGAN \citep{karras2017progressive}, BigGAN \citep{brock2019biggan}, and COCO-GAN \citep{lin2019cocogan}. For example, BigGAN relied on SAGAN \citep{zhang2019sagan} as a baseline (self attention GANs) and used large batch sizes to improve performance (multiplying batch size by a factor of 8 leads to over 40\% increase in inception score over other state of the art algorithms). That study also noted that larger networks had a comparable positive effect, and so did the usage of the {\em truncation} trick (i.e., for the generator, sampling from a standard normal distribution in training while sampling instead from a truncated normal distribution in inference, where samples that are above a certain threshold are re-sampled). Truncating with a lower threshold allowed control of the trade-off between higher fidelity and lower diversity. 

Unfortunately, while many best of breed generative approaches made progress in terms of visual quality and high resolutions, these methods cannot be directly used for semantic attribute control, which is one of our goals in this paper.

\medskip
\noindent
{\bf \em Style Transfer} 

 StyleGAN \citep{Karras2019stylegan} has been very successful at addressing the generation of high dimensional images (1024 by 1024) and is an extension of ProGAN, which was based on progressively growing the encoder and decoder/discriminator in GAN networks. Some of the specific novel features in ~\citep{Karras2019stylegan} consisted of injecting noise at every scale resolution of the decoder, and of using a fully connected (FC) network that mapped a latent vector $Z$ into a 512-length intermediate  latent vector $W$. This so called 'style' vector had incidence on the generation of images throughout multiple scales and was used to influence some attributes of the image (at the low scale coarse attributes like skin tone and at the higher scale fine attributes like hair). 
 An updated version of StyleGAN \citep{Karras2019stylegan2}, StyleGAN2, introduced architectural improvements, along with an improved back projection method for mapping images to latent spaces.

While both methods successfully allowed for using multiple style vectors at different scales as a natural extension,  the resulting images may have included undesirable attributes from the original base images. These methods therefore did not allow for directly controlling specific semantic image attributes and, consequently, unsupervised discovery of consistent semantic attributes. In contrast, the approach we seek strives for the discovery, control and isolation of desirable attributes from spurious attributes for complex imagery.

\medskip\noindent
{\bf \em Information Theoretic Approaches} 

In~\citep{chen2016infogan}, InfoGAN  made use for the first time of the principle of maximizing the mutual information $\mathbb{I}(C; \hat{X})$  between a semantic latent vector $C$ and the generated image $\hat{X}$, where $C$ is a semantic component of the latent vector representation $Z=(Z', C)$, with $Z'$ being a noise vector. This process was originally set up to achieve disentanglement in latent factors between different scales. Experiments  exemplified various degrees of agreement between  such factors and semantic attributes. However, as our study should demonstrate, the mutual information principle actually promotes the control of attributes in images, but true disentanglement is not always achieved via this maximization of mutual information. Indeed, the results in \citep{chen2016infogan} suggested various degree of success and  consistency between the ability of the network  to actually control  and disentangle.

Instead, in our view, the concept of disentanglement was appropriately defined in \citep{chen2018isolating} as the process of finding latent factors that satisfy minimum total correlation, defined by the Kullback-Leibler divergence between the vector's joint distribution and the product of its marginal components. Despite this contribution, the  concepts of control and disentanglement have remained used interchangeably and are often confused with each other in recent literature. In this study, we adopt this definition and help bring order by formally demonstrating how this concept relates to the concept of control and maximization of mutual information. 

\medskip
\noindent
{\bf \em Generative Methods for Anomaly Detection} 

Our goal (c) strives to demonstrate the utility of disentangled representations for an important machine learning task, specifically, anomaly detection.
Related to this is past work that have used DL discriminative and generative model representation learning for anomaly detection. While DL-based anomaly detection schemes initially used discriminative representations -- e.g.~\citep{erfani2016high} -- where deep belief networks were combined with statistical methods, more recent methods that made use of generative representation learning include~\citep{akcay2018ganomaly,schlegl2017unsupervised,deecke2018anomaly,efficient,liu2018generative,gray2018coupled,kimura2018semi,abay2018maneuver,naphade20182018,lai2018industrial,bergmann2018improving,jain2018imagining}. Generally these exploit two main strategies: i) using cyclic reconstruction error in the image space as an anomaly detection metric or ii)  directly using the GAN discriminator. These two techniques were employed for pixel based anomaly detection in ~\citep{schlegl2017unsupervised} and for image based anomaly detection in \citep{efficient} and~\citep{deecke2018anomaly}. Another approach used metrics of reconstruction in latent/code space and is embodied in the work of~\citep{akcay2018ganomaly} 
The method in ~\citep{akcay2018ganomaly} was later extended to include skip connections~\citep{akccay2019skip}. However those methods did not probe the use of generative models that had ability for large resolution image generation and allowed disentanglement, as is done here.
Recently, \citep{burlina2019s} found that most generative approaches fell short of using discriminative embeddings. We consequently focus on comparing to these discriminative models as the state of the art.

\medskip\noindent
{\bf Novel contributions} 

When compared to prior work, the novel contributions of this work are as follows: 
\begin{enumerate}
\item We consider the disentanglement of image attributes and the unsupervised discovery of such attributes,\footnote{Concurrent work we became aware of after first submitting this work in arXiv includes~\citep{shen2020cffac, weili2020semistyle, tewari2020stylerig,harkonen2020ganspace}} proposing a novel approach relying on the combination of mutual information maximization with multiscale GANs. 
\item We bring clarity to the concepts of semantic control and disentanglement, demonstrating that there is a connection between mutual information maximization and total correlation minimization, i.e., between the concepts of attribute control and disentanglement: we demonstrate in Lemma \ref{lemma_proof} that maximizing semantic attribute control encourages the minimizing of entanglement for latent factors. 
\item We show empirically that the proposed approach results in high resolution generation with the ability for unsupervised discovery of latent codes that help control specific semantic image attributes. 
\item We develop several methods using the proposed generative architectures, used for representation learning, for the end task of anomaly detection. We  then demonstrate empirically the resulting performance benefits when compared to other state of the art discriminative methods.
\end{enumerate}

\section{Approach}

The next subsections present our approach including: details on representation learning via our proposed architecture (InfoStyleGAN) in subsection \ref{infostylegan} and the loss function design and analysis in subsection \ref{loss}.

\subsection{Definitions of Attribute Control, Disentanglement, Architecture and Loss Functions}
\label{infostylegan}

We now discuss the concepts of attribute control and disentanglement. Consider first the architecture used here which is depicted in Figure~\ref{fig:TCICSG} and borrows some of the GAN components in StyleGAN \citep{Karras2019stylegan}, including a multiscale generator $\hat{X}=\mathcal{G}(Z)$ and a discriminator $\mathcal{D}$.
\medskip 

\begin{figure*}[h!]
    \centering
    \includegraphics[width=14cm]{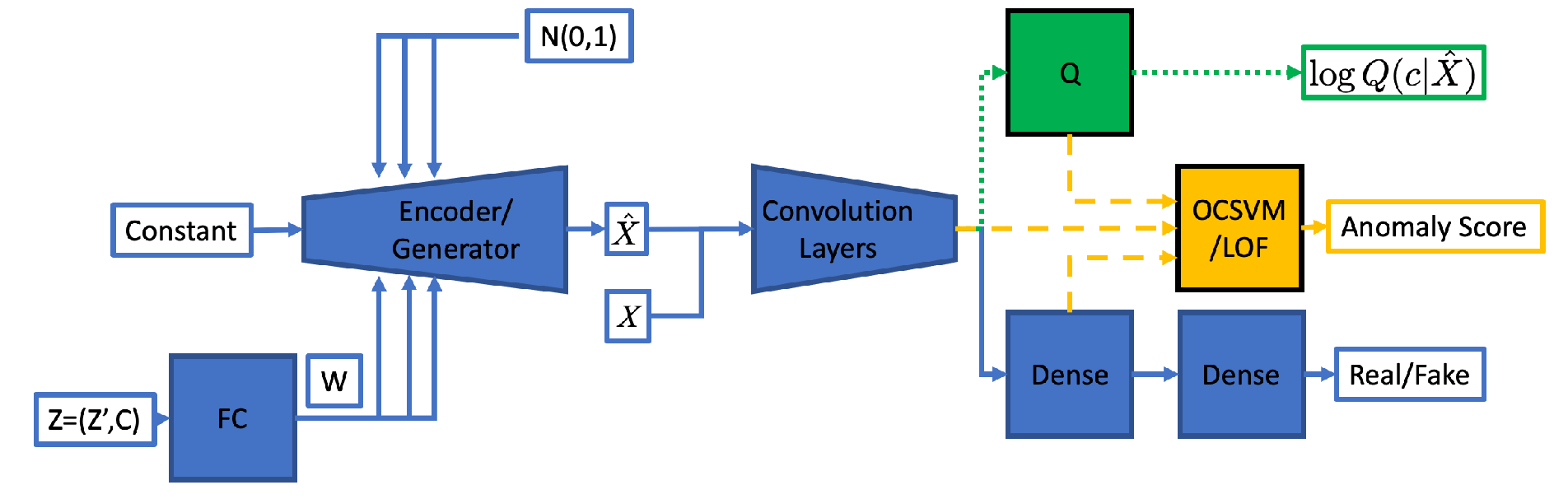}
    \caption{Our proposed multi-scale generator and discriminator architecture: The latent code in $Z$ is split into noise terms $Z'$ and semantically relevant variables $C$. Sample $(z', c)$ is fed into a mapping network and a mutual information maximizing loss is used between latent and output generated image, combined with a conditional loss and traditional GAN adversarial loss. The pathways for the information $Q$ auxiliary network are shown in dotted green while the pathways for anomaly detection via representation and OCSVM and LOF are shown in dashed yellow.}
    \label{fig:TCICSG}
\end{figure*}{}
Since we seek to learn latent space representations that relate to generated image attributes, for attribute discovery and control, the latent vector $Z=(Z',C)$ is decomposed into a standard Gaussian noise vector $Z'$ and a latent vector component $C$ (henceforth called the latent factors) with distribution $p(C)$, where $C$ and $Z'$ are independent of each other. 

We define the concept of disentanglement as the minimization of the total correlation between the different {\em latent factors} $C$, while the faculty for discovery and control of {\em semantic image attributes} is defined via the {\em maximization of  mutual information} between the latent factors and the generated image. The latter concept is explicitly used in our considered loss function. We will demonstrate analytically that the former concept of total correlation minimization is implied -- under certain conditions -- from the latter principle of mutual information maximization.

\noindent
{\bf Control via maximization of mutual information (MI):} 
The maximization of the mutual information $I(C; \mathcal{G}(Z', C))$ between the semantic vector $C$ and the observation $\hat{X}=\mathcal{G}(Z',C)$ is used as a means of discovering the factors of variations in images and forcing the coupling the vector $C$ to the different factors of variations in the images $X$ in the dataset. Since MI computation is complicated by the fact it entails knowing the posterior $p(C|\hat{X})$, one can instead, as in InfoGAN in ~\citep{chen2016infogan}, employ an auxiliary distribution $\mathcal{Q}(C|\hat{X})$ that approximates this posterior and that can be selected to maximize the resulting MI measure.

\noindent
{\bf Disentanglement via minimization of total correlation:} Although \citep{chen2016infogan} introduced an auxiliary loss to maximize the MI between $C = (C_1, C_2, \dots, C_L)$, where each $C_i$ can be governed by a different distribution, and $\hat{X} = \mathcal{G}(Z',C)$, where $Z'$ is an additional noise vector, there is no explicit objective for controlling a diverse set of image attributes. For example, even with the independence implementation on the prior $p(C) = \prod_i p(C_i)$, in \citep{chen2016infogan}, all semantically relevant variables could only seemingly affect the skin in faces, with different variables focusing on skin tone, skin texture, glare on the skin, etc. In effect, despite being sampled as independent, the effect between individual variables on the image are highly correlated. Consequently, a desirable end goal for disentanglement is that knowledge of one latent factor from the image does not affect the knowledge of other latent factors; i.e., having $p(C | \hat{X}=\hat{x}) = \prod_i p(C_i | \hat{X}=\hat{x})$ or conditional independence of the true posterior of the latent factors given a realization $\hat{X}=\hat{x}$ of the generated image. 
This is equivalent to having $\mathbb{TC}(C|\hat{X}=\hat{x}) = 0$, where $\mathbb{TC}(\cdot|\hat{X}=\hat{x})$ is the total correlation \citep{chen2018isolating} given $\hat{X}=\hat{x}$, defined as the Kullback-Leibler (KL) divergence, denoted by $D_{KL}(\cdot \| \cdot)$, between the conditional joint distribution given $\hat{X}=\hat{x}$ and the product of the conditional marginal distributions given $\hat{X}=\hat{x}$: 
$$\mathbb{TC}(C| \hat{X}=\hat{x}) = D_{KL}\Big(p(C | \hat{X}=\hat{x}) \Big\| \prod_{i} p(C_i | \hat{X}=\hat{x})) \Big).$$

We will argue that the previous two concepts are connected to each other as shown in Lemma~\ref{lemma_proof}. Indeed, under certain assumptions, MI maximization constrains the total correlation. Therefore only one such constraint, namely the MI constraint, will be considered henceforth.

\subsection{Loss Function}
\label{loss}
The loss function is comprised of two parts: the adversarial loss $V(\mathcal{D},\mathcal{G})$ for the generator $\mathcal{G}$ and discriminator $\mathcal{D}$ as well as the lower bound $L_{\text{info}}(\mathcal{G}, Q)$ on the mutual information
\begin{align}
V(\mathcal{D},\mathcal{G}) &= \mathbb{E}_{C \sim p(C), \hat{X} \sim \mathcal{G}(Z', C)} (\log (1-\mathcal{D}(\hat{X}))) + \mathbb{E}_{X} (\log (\mathcal{D}(X))) \label{labeledloss} \\
L_{\text{info}}(\mathcal{G}, Q) &= \mathbb{E}_{C \sim p(C), \hat{X} \sim \mathcal{G}(Z', C)}(\log (Q(C | \hat{X})) - \log p(C))
\end{align}
where $\mathbb{E}(\cdot)$ denotes the expectation operator, $Q$ denotes the auxiliary network referenced earlier, and $0 \le \mathcal{D}(\cdot) \le 1$. 
Note that we abused the notation in $\hat{X} \sim \mathcal{G}(Z', C)$ to denote the conditional distribution $p(\hat{X}|C)$ given $C$ (this conditional distribution is intrinsically determined by the distribution of $Z'$ and the Gaussian noise maps).
The optimization problem consists then of determining the triplet $(\mathcal{D}, \mathcal{G}, Q)$ that achieves:
\begin{align}
&\min_{\mathcal{G}, Q} \max_{\mathcal{D}}\ V(\mathcal{D},\mathcal{G}) - \beta L_{\text{info}}(\mathcal{G},Q)  \label{L-overall}
\end{align}
where coefficient $\beta\ge 0$ is a hyper-parameter.  

Monte Carlo estimates of $V(\mathcal{D}, \mathcal{G})$ and $L_{\text{info}}(\mathcal{G},Q)$, $\Tilde{V}(\mathcal{D},Q)$ and $\Tilde{L}_{\text{info}}(\mathcal{G},Q)$ respectively, are used for realizing a tractable optimization. Using a batch size $B$, $\{z'(l), c'(l)\}_{l=1}^B$ is sampled from $Z$, as $Z$ is explicitly defined beforehand, and then fed into the generator to produce the fake image $\hat{x}(l) = \mathcal{G}(z'(l), c'(l))$ $B$ times, as well as the real images $x(l)$ being sampled $B$ times. The estimates for the losses are thus given by:
\begin{align}
&\Tilde{V}(\mathcal{D},\mathcal{G}) = \frac{1}{B} \sum_{l=1}^B (\log (1-\mathcal{D}(\hat{x}(l))) + \log (\mathcal{D}(x(l))),  \\
&\Tilde{L}_{\text{info}}(\mathcal{G},Q) = \frac{1}{B} \sum_{l=1}^B (\log (Q(c'(l) | \hat{x}(l))) - \log p(c'(l))). 
\end{align}

For nomenclature, we refer to InfoGAN as disabling styles and setting $\beta = 1$, StyleGAN as having styles and $\beta = 0$, InfoStyleGAN as having styles and $\beta = 1$, and InfoStyleGAN-Discrete as InfoStyleGAN with C only containing discrete variables.

\section{Relating Attribute Control and Disentaglement}
\label{decorrleationsection}
We next show that the choice of the mean field encoder, i.e., using $Q(C | \hat{X}) = \prod_i Q(C_i | \hat{X})$, for optimizing the mutual information (as used in \citep{chen2016infogan}) contributes to forcing the total correlation to zero. 
\begin{lemma}
\label{lemma_proof}
{\rm
Assume that each $C_i$ is discrete for $i=1,\ldots,L$ (hence $\mathbb{I}(C; \hat{X})$ is bounded from above)
and that $Q(C|\hat{X})=\prod_i Q(C_i|\hat{X})$. When $L_{\text{info}} \rightarrow \mathbb{I}(C;\hat{X})$,
then $$\mathbb{TC}(C| \hat{X}) \rightarrow 0$$ 
almost everywhere in $\hat{X}$, where $\mathbb{TC}(C| \hat{X})$
is the total correlation in $C$ given $\hat{X}$.
}
\end{lemma}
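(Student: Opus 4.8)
The plan is to exploit the standard variational characterization of the InfoGAN objective and then an exact ``total correlation plus marginal mismatch'' decomposition of the relevant KL divergence. First I would recall that $L_{\text{info}}(\mathcal{G},Q)$ is precisely the variational lower bound on the mutual information, so that for any auxiliary $Q$,
$$\mathbb{I}(C;\hat{X}) - L_{\text{info}}(\mathcal{G},Q) = \mathbb{E}_{\hat{X}}\big[\, D_{KL}\big(p(C|\hat{X})\,\big\|\,Q(C|\hat{X})\big)\big] \ge 0.$$
The discreteness of each $C_i$ guarantees $\mathbb{I}(C;\hat{X}) \le H(C) < \infty$, so this gap is a well-defined nonnegative number; this is the only place the boundedness hypothesis is used. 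The hypothesis $L_{\text{info}} \to \mathbb{I}(C;\hat{X})$ therefore forces $\mathbb{E}_{\hat{X}}[D_{KL}(p(C|\hat{X})\,\|\,Q(C|\hat{X}))] \to 0$.

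The algebraic heart of the argument is an exact decomposition of the posterior-versus-$Q$ divergence that separates the genuine total correlation from the marginal approximation error of $Q$. Writing $p(C_i|\hat{x})$ for the marginals of the true posterior $p(C|\hat{x})$ and using the factorization hypothesis $Q(C|\hat{x}) = \prod_i Q(C_i|\hat{x})$, a direct computation (inserting and removing $\prod_i p(C_i|\hat{x})$ inside the logarithm) yields, for every $\hat{x}$,
$$D_{KL}\big(p(C|\hat{x})\,\big\|\,Q(C|\hat{x})\big) = \mathbb{TC}(C|\hat{X}=\hat{x}) + \sum_{i=1}^L D_{KL}\big(p(C_i|\hat{x})\,\big\|\,Q(C_i|\hat{x})\big).$$
Both summands on the right are nonnegative, so in particular $0 \le \mathbb{TC}(C|\hat{X}=\hat{x}) \le D_{KL}(p(C|\hat{x})\,\|\,Q(C|\hat{x}))$ pointwise in $\hat{x}$. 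Conceptually this is the crux: it is exactly what exhibits $\mathbb{TC}$, the disentanglement quantity, as a term dominated by the MI gap that the training objective drives to zero.

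Combining the two displays gives $\mathbb{E}_{\hat{X}}[\mathbb{TC}(C|\hat{X})] \le \mathbb{E}_{\hat{X}}[D_{KL}(p(C|\hat{X})\,\|\,Q(C|\hat{X}))] \to 0$. Since $\mathbb{TC}(C|\hat{X}=\hat{x}) \ge 0$, a nonnegative integrable quantity whose expectation tends to zero converges to zero in $L^1$, and hence (immediately in the exact-equality reading, where a nonnegative integrand of zero integral vanishes a.e., or otherwise along an a.e.\ subsequential limit) we obtain $\mathbb{TC}(C|\hat{X}=\hat{x}) \to 0$ for almost every $\hat{x}$, which is the claim.

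I expect the main obstacle to lie not in any single algebraic step but in pinning down the mode of convergence in the last paragraph: passing from convergence of the expectation $\mathbb{E}_{\hat{X}}[D_{KL}]$ to a genuine almost-everywhere statement about $\mathbb{TC}(C|\hat{X}=\hat{x})$ is clean only in the exact-limit interpretation, whereas for a true sequence one gets $L^1$ (equivalently, in-probability) convergence and a.e.\ convergence only along a subsequence. I would therefore state the hypothesis $L_{\text{info}} \to \mathbb{I}(C;\hat{X})$ with care and, if an honest a.e.\ conclusion is required for a genuine optimization sequence, either pass to subsequences or relax the conclusion to convergence in probability. The decomposition identity itself is routine to verify but is the essential ingredient, since it is what couples the MI gap to the total correlation in the first place.
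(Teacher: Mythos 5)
Your proposal is correct and is essentially the paper's own argument in different packaging: the paper's chain $\mathbb{I}(C;\hat{X}) \geq \mathbb{I}(C;\hat{X}) - \mathbb{E}[\mathbb{TC}(C|\hat{X})] = \sum_i \mathbb{I}(C_i;\hat{X}) \geq L_{\text{info}}$ has exactly two nonnegative gaps, namely $\mathbb{E}[\mathbb{TC}(C|\hat{X})]$ and $\sum_i \mathbb{E}[D_{KL}(p(C_i|\hat{X})\,\|\,Q(C_i|\hat{X}))]$, and these are precisely the two terms of your pointwise identity $D_{KL}(p(C|\hat{x})\,\|\,Q(C|\hat{x})) = \mathbb{TC}(C|\hat{X}=\hat{x}) + \sum_i D_{KL}(p(C_i|\hat{x})\,\|\,Q(C_i|\hat{x}))$ after integrating over $\hat{X}$, with discreteness and the mean-field form of $Q$ used in the same places. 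Your closing caveat about the mode of convergence (a.e.\ vanishing is immediate only in the exact-limit reading, while a genuine optimization sequence gives $L^1$/in-probability convergence and a.e.\ convergence only along a subsequence) makes explicit a point the paper's proof asserts without comment, which is a refinement rather than a deviation.
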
 
\begin{proof}
First, recall the derivation of the InfoGAN objective from~\citep{chen2016infogan}:
\begin{align}
    \mathbb{I}(C; \hat{X}) &= - \mathbb{H}(C | \hat{X}) + \mathbb{H}(C)  \\
    &= \mathbb{E}_{\hat{X}}[\mathbb{E}_{C' \sim p(C|\hat{X})}( \log p(C' | \hat{X}))] + \mathbb{H}(C) \label{eq:lb} \\
    &= \mathbb{E}_{\hat{X}}[\underbrace{D_{KL}(p(C | \hat{X}) \| Q(C | \hat{X}))}_{\geq 0} + \mathbb{E}_{C' \sim p(C|\hat{X})}( \log Q(C' | \hat{X}))] + \mathbb{H}(C)  \\
    &\geq \mathbb{E}_{\hat{X}}[\mathbb{E}_{C' \sim p(C|\hat{X})}( \log Q(C' | \hat{X}))] + \mathbb{H}(C)  \label{eq:lb*} \\
    &= \mathbb{E}_{C \sim p(C), \hat{X} \sim \mathcal{G}(Z', C)}( \log Q(C | \hat{X})) + \mathbb{H}(C)  \nonumber \\
    %&
    &= L_{\text{info}} 
\end{align}
where $\mathbb{H}(\cdot)$ and $\mathbb{H}(\cdot|\cdot)$ denote entropy and conditional entropy, respectively. Note that $\mathbb{H}(C) = \sum_i \mathbb{H}(C_i)$ as we assume the prior to the generative model factorizes independently. 
Starting from \eqref{eq:lb}, we can decompose the logarithmic term for each individual $C_i$ to get:
\begin{align}
    \mathbb{I}(C; \hat{X}) &= \mathbb{E}_{\hat{X}}\left[\mathbb{E}_{C' \sim p(C|\hat{X})}\left( \log \frac{p(C' | \hat{X})}{\prod_i p(C'_i | \hat{X})}\right)\right] \nonumber \\ &\ \ \ \ + \sum_i \left(\mathbb{E}_{ \hat{X}}[\mathbb{E}_{C_i' \sim p(C_i|\hat{X})}( \log p(C'_i | \hat{X}))]+ \mathbb{H}(C_i)\right) \\
    &= \mathbb{E}_{C \sim p(C), \hat{X} \sim \mathcal{G}(Z', C)} (\mathbb{TC}(C|\hat{X})) + \sum_i \mathbb{I}(C_i; \hat{X}). \label{eq:mi}
\end{align}
Now, purely maximizing the mutual information with respect to all variables could also increase the total correlation of the posterior $p(C | \hat{X})$ of a fake image $\hat{X}$, implying that the factors given the image are more entangled, which is undesirable. Thus, we desire a low total correlation $\mathbb{TC}$ and high mutual information $\mathbb{I}$ between $\hat{X}$, the generated image, and each variable $C_i$ individually, which we argue that the original InfoGAN objective implicitly satisfies for discrete (finite-valued) variables. \\
Indeed, we can lower bound each individual $\mathbb{I}(C_i; \hat{X})$ term in \eqref{eq:mi} via the same method as in \eqref{eq:lb*}: we have
\begin{align}
    %&\gamma_{\text{info}} 
    \mathbb{H}(C) &\geq \mathbb{I}(C; \hat{X}) \\
    &\geq \mathbb{I}(C; \hat{X}) - \mathbb{E}_{C \sim p(C), \hat{X} \sim \mathcal{G}(Z', C)} (\mathbb{TC}(C|\hat{X})) \label{eq-sub0} \\
    &= \sum_i \mathbb{I}(C_i; \hat{X}) \label{eq-sub1} \\
    &\geq \sum_i \left(\mathbb{E}_{\hat{X}}[\mathbb{E}_{   C_i \sim p(C_i| \hat{X})}( \log Q(C_i | \hat{X}))]  + \mathbb{H}(C_i)\right) \label{eq-sub2} \\
    &= \mathbb{E}_{C \sim p(C), \hat{X} \sim \mathcal{G}(Z', C)}( \log Q(C | \hat{X})) + \mathbb{H}(C) = L_\text{info}
\end{align}
where \eqref{eq-sub1} holds by \eqref{eq:mi}, \eqref{eq-sub2} follows from \eqref{eq:lb*} applied to each $\mathbb{I}(C_i; \hat{X})$ term. The equality before last is due to our assumption of a mean field encoder and the fact that $p(C) = \prod_i p(C_i)$. 

Thus, when $L_\text{info} \to \mathbb{I}(C; \hat{X})$, the two inequalities in \eqref{eq-sub0} and \eqref{eq-sub2} become tight, in turn implying that $Q(C|\hat{X}) \to p(C|\hat{X})$ and that $\mathbb{TC}(C| \hat{X}) \rightarrow 0$ almost everywhere in $\hat{X}$. 
\end{proof}

As explained earlier, while this lemma only applies when $C$ is a discrete random vector, it nevertheless provides useful insights regarding the connection between MI maximization and total correlation minimization, and between the concepts of attribute control and disentanglement, which had been conflated in past literature, by demonstrating that maximizing semantic attribute control encourages minimizing entanglement of latent factors. It shows however that while connected, these two concepts are formally not equivalent. Furthermore, the direction of this relationship, namely that control encourages disentanglement and not the other way around, motivates our choice of using MI rather than $\mathbb{TC}$ in our loss function for the rest of the study.

\section{Application to Anomaly Detection}
\label{pipeline}

Given our lemma and architecture above, the discriminator may learn a representation of the image not only in the auxiliary network but in layers used for discriminating real versus fake. This is due to using shared weights for both the auxiliary network and the discriminator, encouraging the discriminator to learn a mapping that includes potentially semantic information.

Using InfoStyleGAN trained on a subset of each dataset, we then extract the raw vector output by the discriminator at the end of: (a.) the auxiliary network Q, with dimensions depending on how C is implemented (b.) the last convolutional layer in the discriminator network of dimensions 512 by 4 by 4, and (c.) the last dense layer in the discriminator network, typically of dimension 512.

For nomenclature, the acronyms consisting of these different possibilities for representations and discriminative use of GANs for anomaly detection are described in Table~\ref{table:AUCCeleba}. We whiten these embeddings using PCA and keep all components for the best performance. We then use these representations as embeddings for two anomaly detection methods, including one done via one-class support vector machine (OCSVM) and the other via local outlier factor (LOF). One-class support vector machines learn a hypersphere on the data given such that the radius containing most of the data is minimized. LOFs compare a given point to its nearest neighbors, and if the density around the given point is less than those of its neighbors, then it is categorized as an outlier.

As an alternative, we also test using embeddings output from two networks: the average pooling layer from an Inception network pre-trained to classify 1000 categories of objects from Imagenet, and the global representation from the Deep InfoMax (DIM) network. Deep InfoMax \citep{bachman2019amdim} is trained via a contrastive loss to maximize the similarity between two different views of the same image, whilst minimizing the similarity with all other representations from different images. Consequently, it should learn as unique of representation as possible with respect to factors invariant to the differences in the view. Note that as both of these networks were pre-trained on the entirety of ImageNet, they can leverage the extra data to output more unique representations when compared to our methods.

\section{Experiments}

\subsection{Data}
\label{data}
The datasets herein used are the public domain 
CelebA \citep{liu2015faceattributes}, a dataset of celebrity faces over 200,000 images, and Stanford Cars \citep{KrauseStarkDengFei-Fei_3DRR2013}, containing around 16,000 images.

\noindent {\bf CelebA:}
CelebA consists of over 200,000 celebrity faces with various attributes such as gender and age. These images are 218 by 178 pixels, and to preprocess them, we take a 128 by 128 crop with the center (121, 89). 

\noindent {\bf Stanford Cars:}
This dataset contains 16,151 images of vehicles of different makes, models, and years. For preprocessing, we use the same preprocessing that \citep{Karras2019stylegan} used for LSUN Cars on this dataset, targeting a resolution of 512 by 384 pixels. First, we crop height to match the correct aspect ratio, ignoring any images that require upsampling. We then resize to 512 by 384 pixels, and then pad this image to a 512 by 512 image. 

For disentanglement experiments, we use the full dataset in all cases to cover every potential semantic attribute of the dataset.
For anomaly detection experiments, we use the following classes as inliers and outliers: 
male celebrities as inliers and female celebrities as outliers for CelebA, and small vehicles (compacts/sedans) as inliers and large vehicles (trucks/vans) as outliers for Stanford Cars.

\noindent

\subsection{Additional Implementation Details} 
We adopt the basic settings of the implementation from \citep{Karras2019stylegan} for all datasets. However, mixing is turned off, as it was found that mixing the styles reduces disentanglement. Moreover, as we are focused on reconstructing the initial vector used to create the style vector, introducing additional style vectors during generation would cause the information loss term to become ambiguous. For computational efficiency, we append $Q$ to $D$ as in Figure~\ref{fig:TCICSG} similar to \citep{chen2016infogan}. 
For CelebA, we use a $512$ dimension latent code for $(Z', C)$, with $C$ consisting of seven discrete Bernoulli variables, one discrete categorical variable of dimension 3, and ten continuous uniform variables. For Stanford Cars,  we use a $512$ dimension latent code for $(Z', C)$, with $C$ consisting of 20 continuous uniform variables.  For implementing $Q(C | \hat{X})$, we use logits for each of the discrete variables, and treat the posterior distribution of the continuous uniform variables as  Gaussian $\mathcal{N}(\mu (\hat{X}), \sigma^2 (\hat{X}))$ with mean $\mu (\hat{X})$ and variance $\sigma^2 (\hat{X})$. 

Specifically for anomaly detection, we train each GAN on a dataset comprising of solely images in the inlier class and take a subsampling for training OCSVM or LOF. We then test on a balanced test set for both inliers and outliers for each dataset. Each set of representations was normalized, as well as fed through PCA using whitened components. All generative models and Deep InfoMax were found to perform best when all components are used, whereas Inception V3 performs best when 1,024 components were used. These standardized representations were then used to train the OCSVM/LOF model.

\medskip\noindent
\subsection{Disentanglement, Control, and Generative Quality: Quantitative Assessment}

\begin{table}[!t]
    \caption{Quality of generated images: This table demonstrates that adding disentanglement properties does not negatively impact visual performance (FID). Best FID found during training for various architectures including InfoStyleGAN, with standard deviations taken over five estimates of the metric. We also list comparisons to other models on this dataset, with the first three taken from \citep{1711.10337}. As the authors of this paper were conducting a large scale study where models were not fully trained in order to explore hyperparameters, the results for these models could possibly be better.
    }
    %\vspace{0.1in}
\centering

    \begin{tabular}{cccc}
    \hline \hline
         Models & CelebA  & Stanford Cars\\
         \hline
        InfoGAN & 9.91 ($\pm 0.06$) 
        &  
        $\dagger$ \\
        StyleGAN & 9.08 ($\pm 0.10$)
        & 
        21.35 ($\pm 2.01$)
        \\
        InfoStyleGAN  & 9.90 ($\pm 0.07$)
        &  
        23.52 ($\pm 1.97$) 
        \\
        InfoStyleGAN-Discrete & 14.3 ($\pm 0.05$)
        &   
        $\dagger$ 
        \\
        \hline
        WGAN GP* & 30.0
        &  
        $\dagger$  \\
        BEGAN* & 38.9
        &  
        $\dagger$  \\
        DRAGAN* & 42.3
        & 
        
        $\dagger$ \\
        COCO-GAN* & 4.2
        & 
         
        $\dagger$ \\
        \hline \hline
        
    \end{tabular}{}
    \label{table:FID}

    %\vspace{0.2in}
\end{table}{}

\begin{table}[!t]
    \caption{
    We use the precision and recall metrics from \citep{kynk2019precision} on our trained CelebA models. We use Inception V3 as our feature extractor and either $k=3$ or $k=10$ neighbors.
    }
    %\vspace{0.1in}
\centering

    \begin{tabular}{cccccc}
    \hline \hline
         Metrics & InfoGAN  & StyleGAN & InfoStyleGAN & InfoStyleGAN-Discrete\\
         \hline
        Precision $(k=3)$ & 0.654 & 0.720 & 0.629 & 0.635%46 
        \\
        Recall $(k=3)$ & 0.291 & 0.306 & 0.319 & 0.226 %40 
        
        \\
        \hline
        Precision $(k=10)$ & 0.855 & 0.893 & 0.848 & 0.850%46 
        \\
        Recall $(k=10)$ & 0.498 & 0.561 & 0.562 & 0.447 %40 
        
        \\
        
        \hline \hline
        
    \end{tabular}{}
    \label{table:pr3}

    %\vspace{0.2in}
\end{table}{}
Although several metrics have emerged to characterize quality and diversity in generative models, we use FID \citep{heusel2017gans} as our primary measure of distance between the two distributions, which is computed as:
\begin{align}
 FID = \| \mu_r - \mu_g \|^2 + \text{Tr}(\Sigma_r + \Sigma_g - 2(\Sigma_r\Sigma_g)^{0.5})
\end{align}
where $\| \cdot \|^2$ and $\text{Tr}(\cdot)$ denote the $L_2$ norm and the trace, respectively, and the assumption is made that $X_r \sim \mathcal{N}(\mu_r, \Sigma_r)$, i.e., a normal distribution (with mean-vector $\mu_r$ and covariance matrix $\Sigma_r$) for the activations from the Inception v3 pool layer for real examples, and likewise $X_g \sim \mathcal{N}(\mu_g, \Sigma_g)$ for generated examples.

FID is reported for each of the architectural variants, and both datasets, as shown in Table \ref{table:FID}. Additionally comparisons are shown in the last rows and all but COCO-GAN are taken from~\citep{1711.10337}, where the training scheme is different; that study used a smaller architecture and a $64\times64$ resolution version of CelebA. For COCO-GAN, \citep{lin2019cocogan} used a larger model along with a different resolution of CelebA. InfoGAN and InfoStyleGAN perform slightly worse than StyleGAN in terms of FID, and using only discrete variables in C in InfoStyleGAN performs significantly worse.

As the FID comprises of two terms that can roughly be taken to mean the fidelity and diversity of the fake distribution, we use two additional metrics, precision and recall from \citep{kynk2019precision}, to better characterize the fidelity (precision) and diversity (recall) separately for CelebA. \citep{kynk2019precision} uses the image features of a specific network, Inception V3, chosen in our case to match the feature space of FID, to construct an approximate manifold of each the real images and fake images separately. This manifold is comprised of hyperspheres around each individual point in the dataset, where each hypersphere has the smallest radius that contains the nearest $k$ neighbors. The precision is then defined as the proportion of fake images whose features lie in the constructed manifold of real images, and the recall is defined as the proportion of real images whose features lie in the constructed manifold of fake images.

We show the results in Table \ref{table:pr3}, where we see that StyleGAN has the highest precision at 0.720, while InfoStyleGAN has the highest recall at 0.319, for $k=3$. Comparing InfoGAN and InfoStyleGAN, we see that the results appear to match both models having nearly equal FID, with InfoGAN having marginally higher precision and lower recall than InfoStyleGAN, exhibiting some trade-off between the quality and diversity. For $k=10$, we see that the metrics overall follow a similar trend as what is presented in \citep{kynk2019precision} for the FFHQ dataset, which can be construed as a higher quality CelebA.

In order to characterize the ability of the algorithms to control  individual known image attributes of generated images, we also utilize the mutual information gap (MIG) \citep{chen2018isolating} for CelebA. Stanford Cars did not have ground truth attributes associated with it.

\noindent {\em Mutual Information Gap:} As we have ground truth attribute labels for CelebA describing various semantic attributes $\{V_k\}_{k=1}^K$, where $K$ is the number of ground truth attributes, we can use those directly in a supervised fashion to estimate the mutual information between each $V_k$ and each $C_i$, $i=1,\ldots,L$, as described by the auxiliary network $\mathcal{Q}$. Consequently, we estimate 
\begin{align}
    \mathbb{I}(V_k; C_i) &= \mathbb{E}_{V_k, C_i' \sim \mathcal{Q}(C_i | V_k)} \left(\log \sum_{\Tilde{x} \in X_{V_k}} \mathcal{Q}(C_i' | \Tilde{x}) p(\Tilde{x} | V_k)\right) + \mathbb{H}(C_i) 
\end{align}
over k and i, where $X_{v_k}$ is the set of images that correspond to having the label $v_k$. As the overall conditional probability $p(\Tilde{x} | v_k)$ is unknown, we assume a uniform distribution over all $\Tilde{x}$ that have $v_k$ as a label. The MIG is then 
 \begin{align}
     MIG &= \frac{1}{K} \sum_{k=1}^{K} \frac{1}{\mathbb{H}(V_k)} \left(\mathbb{I}(V_k; C_{i(k)}) - \max_{i \neq i(k)} \mathbb{I}(V_k; C_{i})\right) 
 \end{align}
 where $i(k) = \text{argmax}_i \mathbb{I}(V_k; C_i)$
is the index over the latent factors $C_i$ that selects the $C_i$ with the maximum mutual information with respect to the given ground truth attribute. Consequently, the MIG is the normalized difference between the maximum and second largest mutual information.
For CelebA, the set of attributes $\{V_k\}$ consists of 40 attributes, including binary variables for smiling, attractiveness, etc. 

There are two consequences of using this measure: (1) a larger value indicates that the information about a ground truth attribute is concentrated in a single latent factor which aligns with the goals of disentanglement and (2) a small value does not necessarily indicate that our model does not successfully disentangle but that the semantic attributes it discovers may not align with any of the ground truth semantic attributes. Consequently, we also include the maximum mutual information to indicate if this is occurring.
In Table \ref{table:MIG}, we see that InfoStyleGAN for CelebA does improve upon disentanglement compared to InfoGAN.

\begin{table}[!htbp]
    \caption{Faculty for attribute control and disentanglement: For CelebA, we use Mutual Information Gap (MIG, Equation (13)) on the ground truth binary labels of CelebA using the auxiliary network. Unlike the various autoencoder architectures, we do not necessarily expect that the semantic attributes learnt will be all-inclusive; so we include the maximum mutual information (third column) over each ground truth attribute and $C_i$ pair to see if any are actually significant. We do see an improvement in disentanglement using architectures with mutual information loss, whereas InfoStyleGAN-Discrete discovers semantic attributes that are not aligned with the ground truth attributes. Consequently, for the discrete-only architecture, the MIG measure is inconclusive.\\}
    %\vspace{-0.5cm}

\centering
    \begin{tabular}{ccc}
    \hline \hline
         Models & MIG  & $\max_{k,i} \mathbb{I}(V_k; C_i)$\\
         \hline
        InfoGAN & 2.4e-2 %46 
        & 4.9e-2 \\
        InfoStyleGAN & 3.4e-2 %31 
        & 5.9e-2%6  
        \\
        InfoStyleGAN-Discrete &  1.2e-4 %31 
        & 3.0e-4 %6  
        \\
        \hline \hline
        
    \end{tabular}{}
    \label{table:MIG}

\end{table}{}

\subsection{Disentanglement, Control, and Generative Quality: Qualitative Assessment}
\begin{figure}[!t]
\begin{subfigure}{0.48\textwidth}

\includegraphics[width=\linewidth]{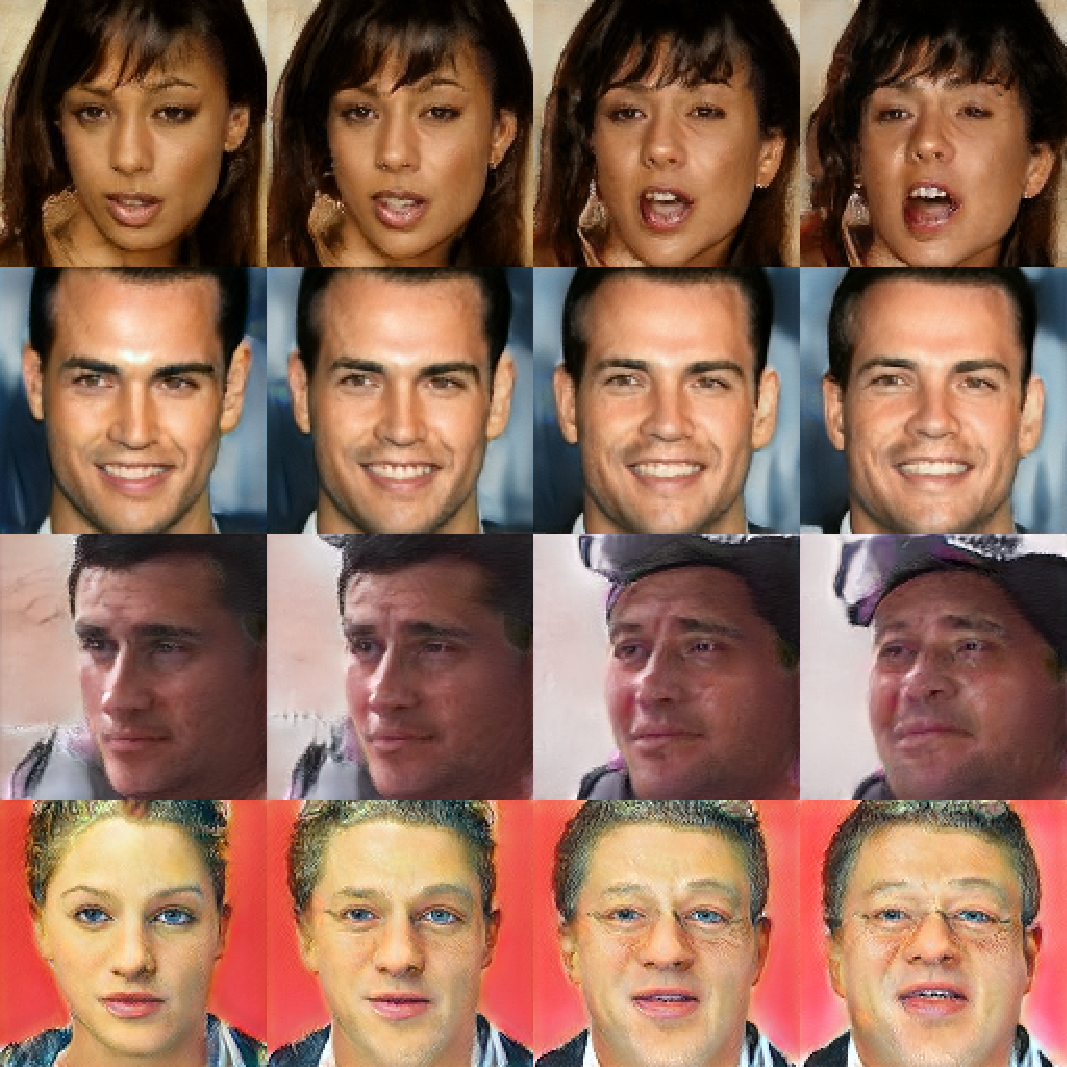}
\caption{InfoStyleGAN: Continuous variable controlling of head orientation up and down.}\label{azimuth}
\end{subfigure}
\hfill
\begin{subfigure}{0.48\textwidth}

\includegraphics[width=\linewidth]{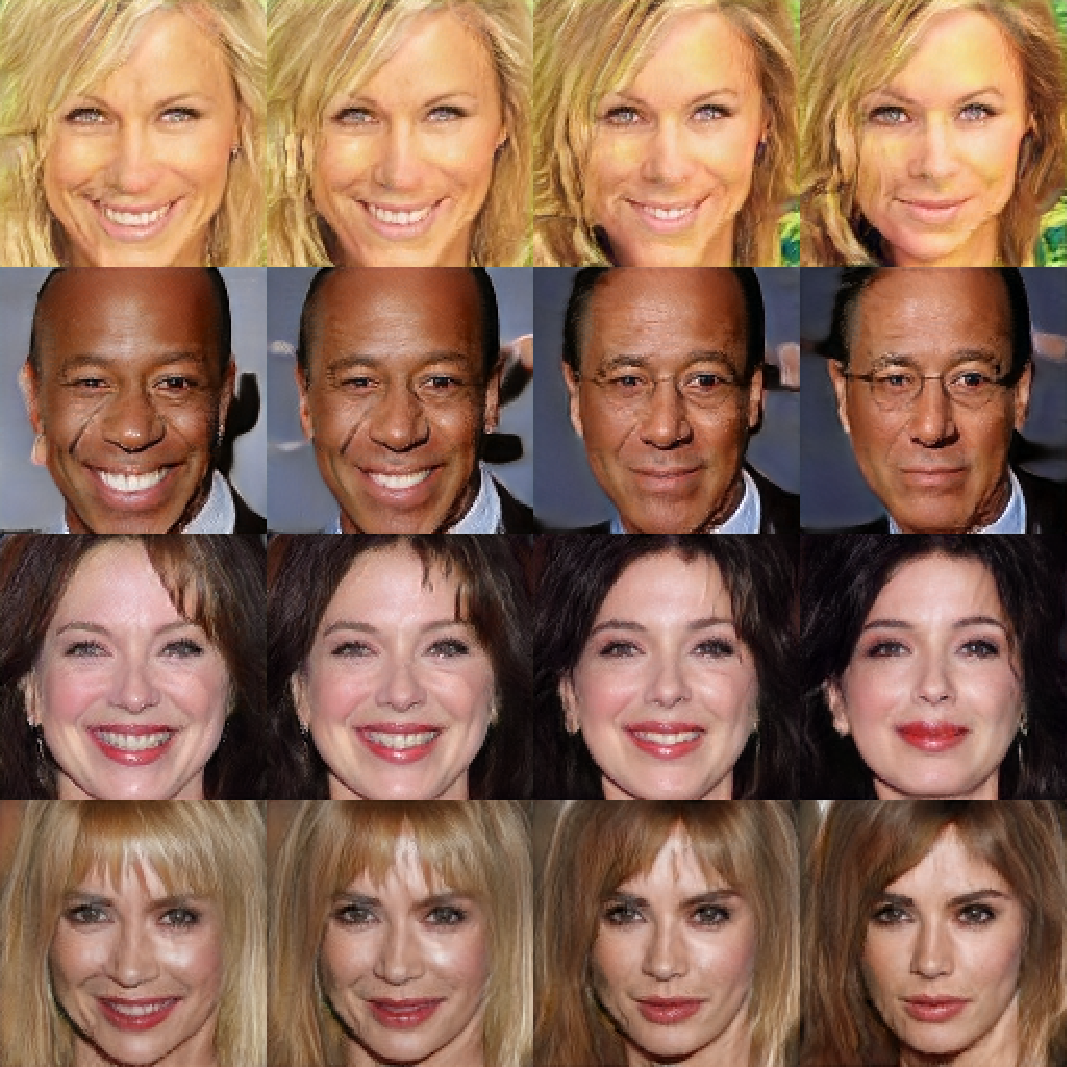}
\caption{InfoStyleGAN: Continuous variable controlling smiling and hair color.}\label{smiling-hair}
\end{subfigure}
\begin{subfigure}{0.48\textwidth}

\includegraphics[width=\linewidth]{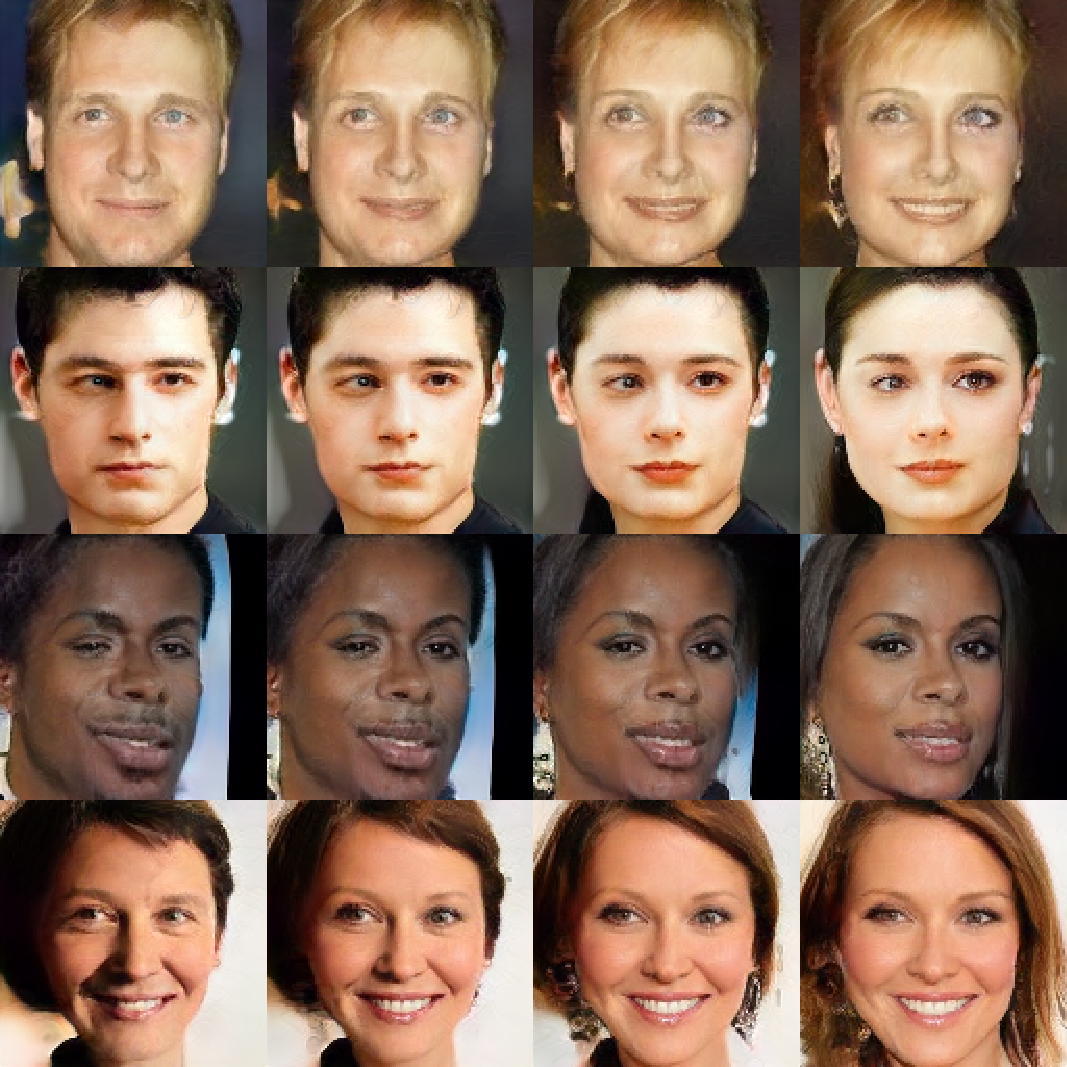}
\caption{InfoGAN: Continuous variable controlling the rotation side to side.}\label{fig:baseline-gender}
\end{subfigure}
\hfill
\begin{subfigure}{0.48\textwidth}
\includegraphics[width=\linewidth]{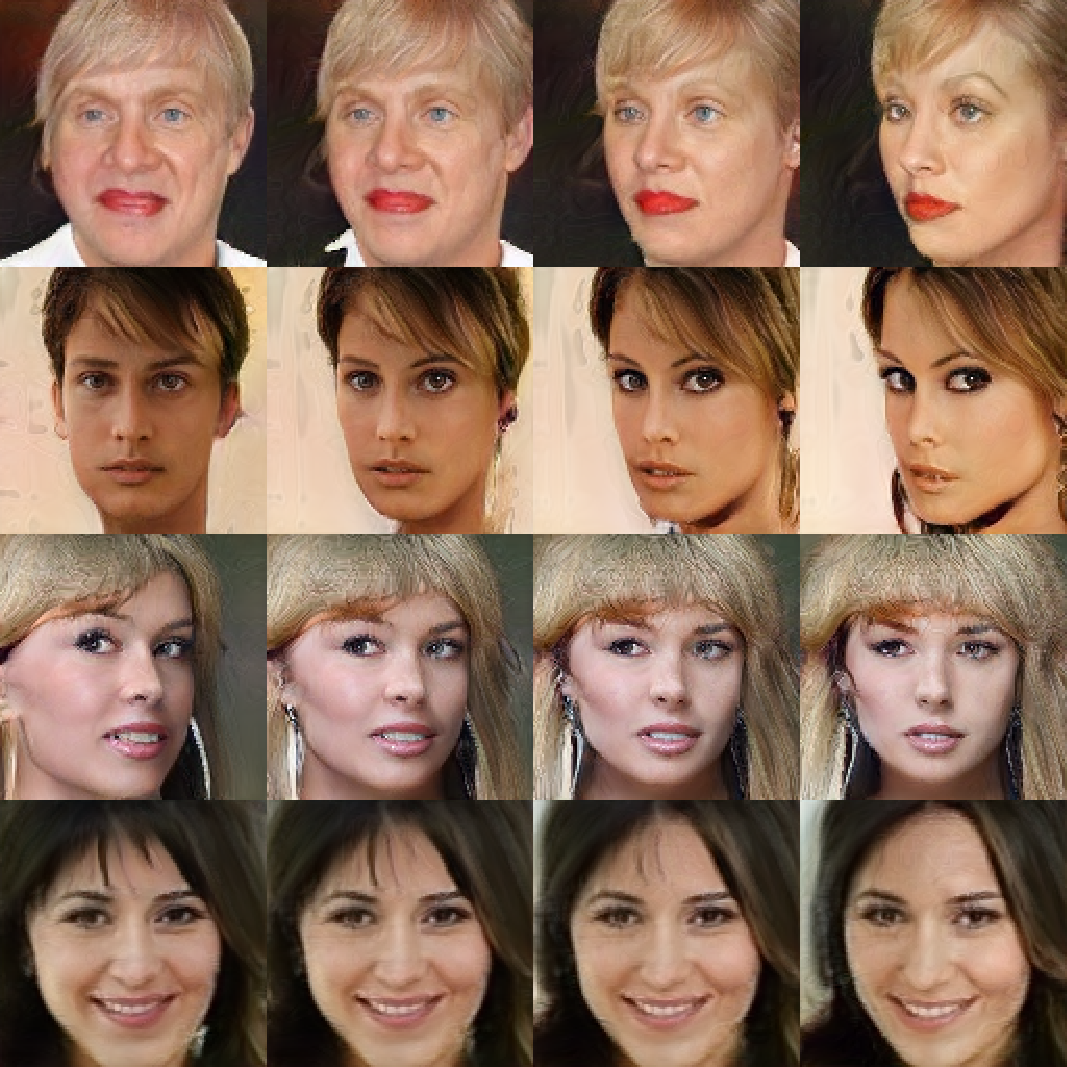}
\caption{InfoGAN: Continuous variable controlling gender.}\label{contradiction}
\end{subfigure}

\caption{Effects of control and disentanglement by InfoGAN and InfoStyleGAN on CelebA.}

\vspace{-0.2cm}
\end{figure}
Figures \ref{azimuth} and \ref{smiling-hair} display example attribute control for the proposed architecture for CelebA. Similar to the original results for InfoGAN, we see variables corresponding to emotion and head position; however these correspond to a continuous variable here rather than a categorical variable and exhibit in our case a greater magnitude of control. Surprisingly, the image fourth from the right in Figure \ref{azimuth} appears to also control the glare of the glasses as the head is tilted up. Although some entanglement is still seen, such as the example on the left side of Figure \ref{azimuth}, where the faces become more masculine as the orientation increases, or the third from the right on Figure \ref{smiling-hair} which has glasses appear through increasing the smile. However, across all source images, the effect the attribute control is always very consistent in nature.

In contrast, for InfoGAN, we see comparatively more entanglement across the various source images and less control. For Figure \ref{fig:baseline-gender}, we see that InfoGAN exhibits a significant degree of control over the horizontal orientation of the head, but the endpoints are not consistent with their starting and final positions compared to Figure \ref{azimuth}. For Figure \ref{contradiction}, we also see control over the gender, which is entangled somewhat with smiling, and the degree of this smiling is not as significant as Figure \ref{smiling-hair}. Consequently, these controls on the tuned baseline do not show a consistent effect like those on InfoStyleGAN. The continuous variables in both cases exhibited the most interesting factors, whereas the Bernoulli or categorical variables primarily affected the pose.

\subsection{Anomaly Detection: Quantitative Assessment}

\begin{table}[!ht]
\centering
    \caption{Table describing ROC AUC, overall accuracy, and F1 score for each method tested on CelebA dataset with confidence intervals in brackets. The first entry in each method is the network used to get the representation, the second entry for generative methods is the representation used, and the last is the anomaly detection method. The second column is the abbreviation for each method, taking the first letter from each field. Inliers are male celebrities, and outliers are female celebrities.\\}
    %\vspace{-0.5cm}

    \resizebox{\textwidth}{!}{
    \begin{tabular}{lllll}
    \hline \hline
         Method &  & ROC AUC & Accuracy & F1 Score \\
         \hline
        InfoStyleGAN $\rightarrow$ Q $\rightarrow$ OCSVM & IQO & 0.567 [0.559, 0.576]& 54.47 \% [53.74\%, 55.19\%] & 0.524\\
        InfoStyleGAN $\rightarrow$ Q $\rightarrow$  LOF & IQL  & 0.567 [0.559, 0.576]& 52.47 \% [51.74\%, 53.2\%] & 0.244\\
        InfoStyleGAN $\rightarrow$ Conv $\rightarrow$  OCSVM & ICO & 0.593 [0.585, 0.601]& 58.09 \% [57.37\%, 58.81\%] & 0.632\\
        InfoStyleGAN $\rightarrow$ Conv $\rightarrow$LOF & ICL & 0.6 [0.592, 0.608]& 52.12 \% [51.39\%, 52.85\%] & 0.329\\
        InfoStyleGAN $\rightarrow$ Dense $\rightarrow$  OCSVM & IDO & 0.607 [0.599, 0.615]& {\bf 58.83 \% [58.11\%, 59.55\%]} & {\bf 0.643}\\
        InfoStyleGAN $\rightarrow$ Dense $\rightarrow$  LOF & IDL & {\bf 0.608 [0.6, 0.617]} & 54.24 \% [53.52\%, 54.97\%] & 0.392\\
        \hline \hline
        Inception V3 $\rightarrow$  OCSVM & IO & 0.629 [0.621, 0.638]& 59.24 \% [58.52\%, 59.96\%] & 0.704\\
        Inception V3 $\rightarrow$  LOF & IL & 0.629 [0.621, 0.637]& 60.96 \% [60.25\%, 61.67\%] & 0.661\\
        Deep InfoMax $\rightarrow$  OCSVM & DO & 0.604 [0.595, 0.612]& 51.78 \% [51.05\%, 52.51\%] & 0.675\\
        Deep InfoMax $\rightarrow$  LOF & DL & 0.603 [0.595, 0.611]& 57.17 \% [56.44\%, 57.89\%] & 0.696\\
         \hline
        
    \end{tabular}{}
    }

    \label{table:AUCCeleba}
\end{table}{}
\begin{table}[!ht]
    \caption{Table describing ROC AUC, overall accuracy, and F1 score for each method tested on the Stanford Cars dataset with confidence intervals in brackets. Inliers are small vehicles such as sedans and compacts, and outliers are large vehicles such as trucks and vans.}
    %\vspace{-1cm}

\begin{center}
    \resizebox{\textwidth}{!}{
    \begin{tabular}{lllll}
    \hline \hline
         Method &  & ROC AUC & Accuracy & F1 Score \\ 
         \hline
        InfoStyleGAN $\rightarrow$ Q $\rightarrow$ OCSVM & IQO & 0.326 [0.293, 0.359]& 39.10 \% [36.08\%, 42.12\%] & 0.206 \\ 
        InfoStyleGAN $\rightarrow$ Q $\rightarrow$  LOF & IQL  & 0.346 [0.312, 0.38]& 44.50 \% [41.42\%, 47.58\%] & 0.031 \\
        InfoStyleGAN $\rightarrow$ Conv $\rightarrow$  OCSVM & ICO & 0.379 [0.345, 0.414]& 42.60 \% [39.54\%, 45.66\%] & 0.305 \\
        InfoStyleGAN $\rightarrow$ Conv $\rightarrow$LOF & ICL & 0.460 [0.424, 0.496]& 48.70 \% [45.60\%, 51.80\%] & 0.111 \\
        InfoStyleGAN $\rightarrow$ Dense $\rightarrow$  OCSVM & IDO & 0.809 [0.782, 0.836]& 71.70 \% [68.91\%, 74.49\%] & {\bf 0.748} \\ 
        InfoStyleGAN $\rightarrow$ Dense $\rightarrow$  LOF & IDL & {\bf 0.867 [0.845, 0.89]}& {\bf 77.50 \% [74.91\%, 80.09\%]} & 0.742 \\ 
        \hline \hline
        Inception V3 $\rightarrow$  OCSVM & IO & 0.807 [0.78, 0.834]& 74.40 \% [71.70\%, 77.10\%] & 0.767 \\ 
        Inception V3 $\rightarrow$  LOF & IL & 0.707 [0.675, 0.739]& 52.70 \% [49.61\%, 55.79\%] & 0.218 \\ 
        Deep InfoMax $\rightarrow$  OCSVM & DO & 0.577 [0.542, 0.613]& 55.80 \% [52.72\%, 58.88\%] & 0.546 \\ 
        Deep InfoMax $\rightarrow$  LOF & DL & 0.569 [0.533, 0.604]& 52.40 \% [49.30\%, 55.50\%] & 0.244 \\ 
         \hline
        
    \end{tabular}{}
    }
\end{center}

    \label{table:AUCCars}
\end{table}{}

Tables \ref{table:AUCCeleba} and \ref{table:AUCCars} report comparisons between the various different variants of our pipeline for anomaly detection on CelebA and Stanford Cars respectively. We weigh ROC AUC most heavily, though accuracy and F1 score are closely correlated.

For CelebA in Table~\ref{table:AUCCeleba}, we see that the Q network representations performs the worst, followed by the convolutional representations, and that the dense representations are the best.

However, the generative models perform on par overall with the discriminative methods, performing better than Deep InfoMax and worse than Inception V3.

For Stanford Cars in Table~\ref{table:AUCCars}, we also note that the dense representations perform best, while all other representations perform significantly worse. Given the failure of the Q network to optimize for its lower bound, the performance of the Q representations is somewhat expected, though not for the other two representations. One possible reason for this disparity can be deduced from Figure \ref{carmatrix} (discussed below) since large and small vehicles appear similar locally, and the global structure of the dense representation helps to discern the two classes. Interestingly, the generative approach actually does overtake both discriminative approaches here, which also uses a more global representation of the image.

Finally, we also plot the ROC curves for each dataset in Figures \ref{ROCCurvesFaces} and \ref{ROCCurvesCars}. For Stanford Cars, we see that improvements in AUC are distributed evenly throughout the curve. However, for CelebA, we see that we do not get any actual improvement in the true positive rate between different methods without increasing the false positive rate significantly. This may be due to how gender is characterized by finer details, such as makeup or facial structure, which may be treated as invariants by only training on the inliers.

\begin{figure}[h!]
\begin{subfigure}{0.5\textwidth}
\includegraphics[width=\linewidth]{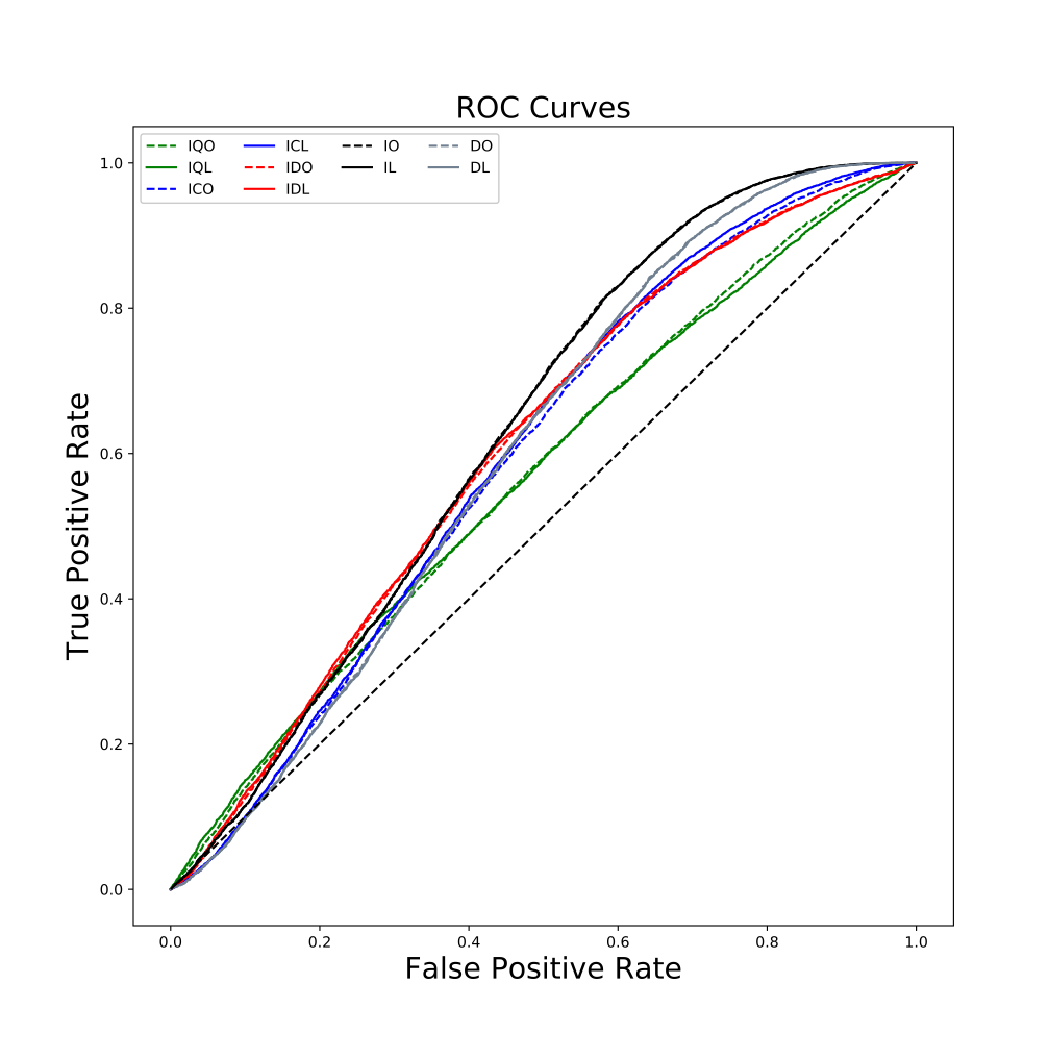}

\caption{CelebA: Table \ref{table:AUCCeleba}. }
\label{ROCCurvesFaces}
\end{subfigure}
\begin{subfigure}{0.5\textwidth}
\includegraphics[width=\linewidth]{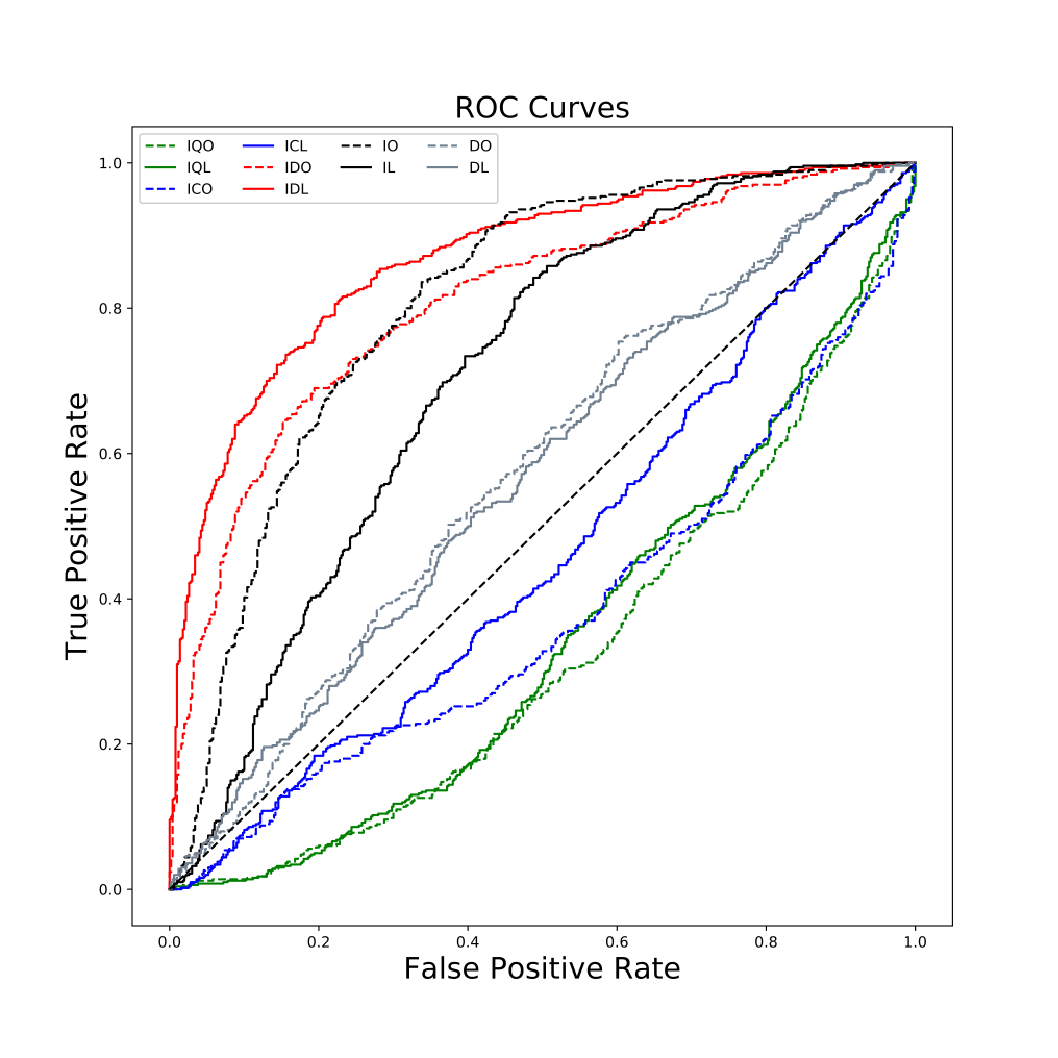}

\caption{Stanford Cars: Table \ref{table:AUCCars}.}
\label{ROCCurvesCars}
\end{subfigure}

\caption{Plots of ROC curves for each dataset used. Dashed lines are methods using OCSVM, while solid lines use LOF.}

\end{figure}
 
\medskip\noindent
 
\subsection{Anomaly Detection: Qualitative Assessment}

\begin{figure}[h!]

\begin{subfigure}[h!]{0.5\linewidth}
\includegraphics[width=\linewidth]{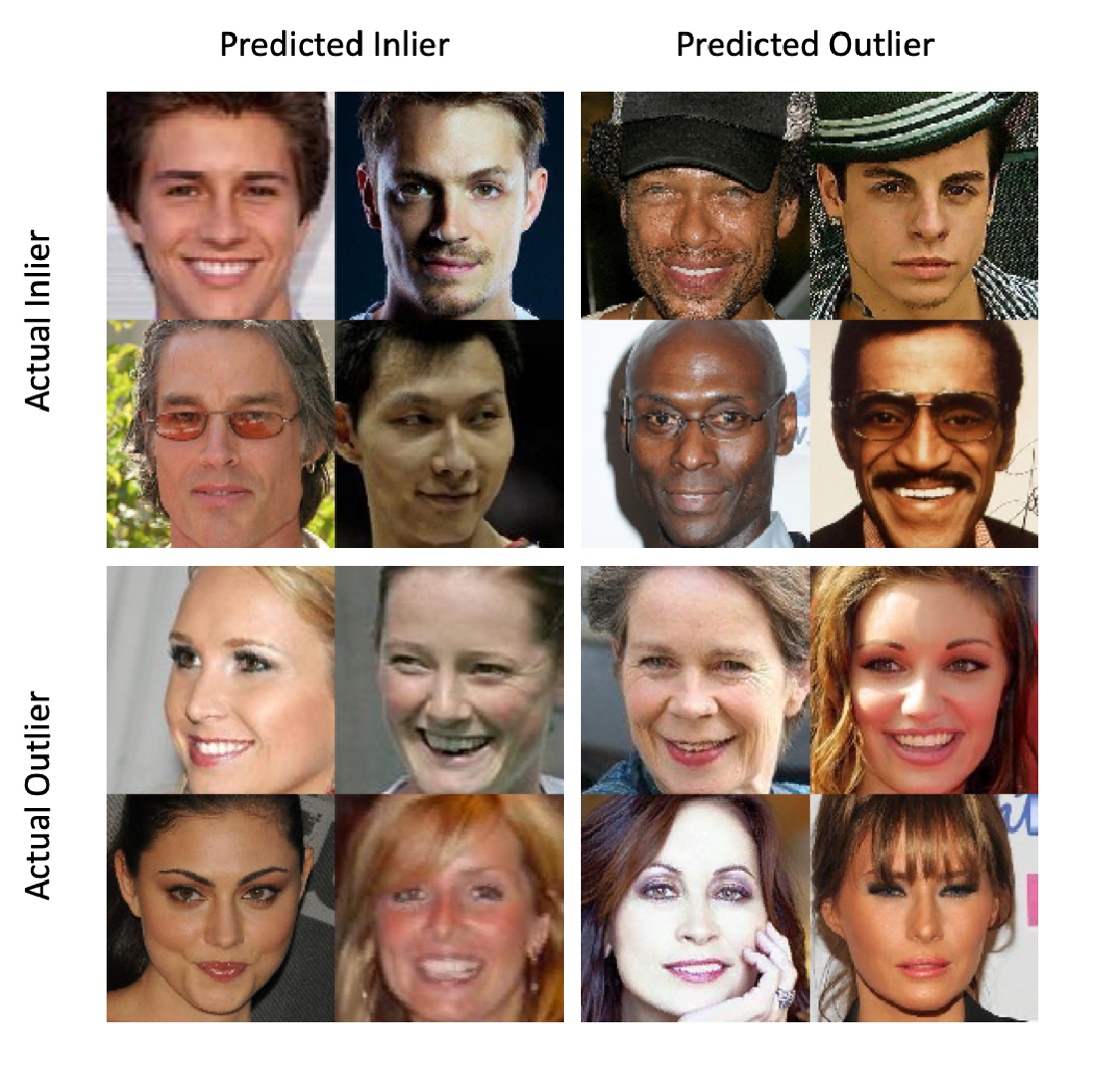}
\caption{CelebA}
\label{facematrix}
\end{subfigure}
\begin{subfigure}[h!]{0.5\linewidth}
\includegraphics[width=\linewidth]{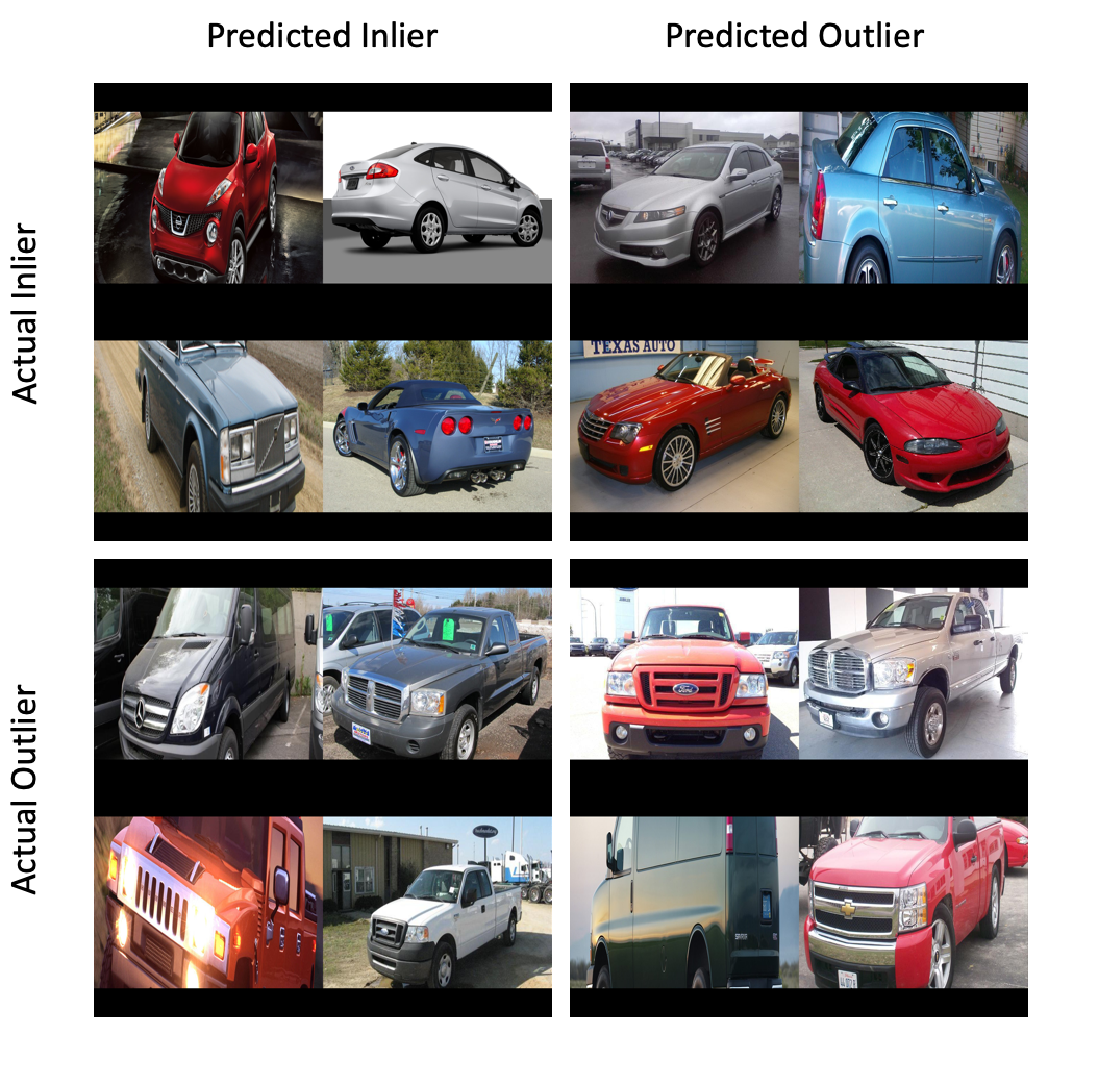}
\caption{Stanford Cars}
\label{carmatrix}
\end{subfigure}
\caption{A confusion matrix, with each column denoting the predictions and each row the ground truth, of the anomaly detector using the dense representation of InfoStyleGAN with OCSVM on each dataset.}
\end{figure}

Finally, we show in Figures \ref{facematrix} and \ref{carmatrix} a confusion matrix containing example images for our anomaly detector for 
CelebA and Stanford Cars respectively. Each figure is split into four quadrants, with the columns denoting the predictions by the model and rows denoting the true status of the image.

\section{Discussion}
\subsection{Control}

Visual inspection of our results demonstrates control of semantic variables as well as a good degree of disentanglement for our proposed full fledged model for 
CelebA. 
In particular: we found on par FID, i.e., 9.90 for InfoStyleGAN when compared to the ablated baselines (i.e., 9.91 for InfoGAN), while InfoGAN did not perform as well with regard to disentanglement (i.e., MIG=3.4e-2 for InfoStyleGAN, compared to MIG=2.4e-2 for InfoStyleGAN). FID was worse for other state of the art algorithms that did not attempt to disentangle, except for COCO-GAN, which uses a much bigger model, so the comparison is not really apple-to-apple. 

In general, comparing InfoGAN to our method InfoStyleGAN, we see that InfoGAN does not have the same level of control as InfoStyleGAN likely due to these architectural differences despite having a comparable distance to the real distribution. From the purported benefits of StyleGAN, this disentanglement on the semantic variables is consistent with increased disentanglement of the overall latent space. Also interesting is that the overall distance from the real distribution is similar for InfoGAN and InfoStyleGAN, with StyleGAN performing slightly better. As the FID comprises of two terms, one measuring the distance between the two means and the other measuring the distance between the covariances. As the mutual information maximization increases the entropy of the generated images, then presumably the info-methods are closer in terms of diversity rather than closer in terms of the average, especially given that the generators still have to generate all combinations of semantic attributes and still maintain the image's realism. These generators may not be able to infer from the training dataset how to combine these factors in a realistic way, which may be reflected in the precision and recall metrics in Table \ref{table:pr3} for $k=3$.

Of note, during our experiments we observed that CelebA did not have $L_{info}$ fully maximized, which explains why some entanglement is still observable in some of the images. 
Consequently an avenue of future work is to address this issue by trying to explicitly minimize total correlation in the loss function along with attempting different methods for the maximization of mutual information such as those found in \citep{poole2019bounds}.

In terms of Stanford Cars, the best lower bound of the mutual information it could achieve was roughly 0.1. One possibility for this performance may be that due to the dataset having significantly less data compared to CelebA, the generator does not know how to combine potential attributes in a realistic way, so it focuses more on maintaining its realism rather than optimizing for the mutual information, which may also relate to its performance in anomaly detection.  

In aggregate, we find that the proposed approach gave promising results for control and discovery of semantic attributes. Our findings that sometimes one latent factor appeared to influence jointly two different attributes is consistent with our lemma explaining that it is the control objective (via mutual information maximization) that encourages disentanglement (via total correlation minimization). Indeed since mutual information maximization is achieved via maximization of a lower bound on mutual information, if this bound is not tight, then minimization of total correlation down to zero may not be achieved (in the case of discrete variables) which explains residual correlation between factors. 

We believe that the discovery of latent factors that control semantic attributes has other potential applications for AI tasks such as debiasing in healthcare. It allows the discovery of factors of variations that can be the basis for a sensitivity analysis that can help address issues of bias in AI. These can be of importance for healthcare applications where phenotype discovery and sensitivity analysis of AI with regard to these phenotype is of interest. Fundamentally it is complementary to methods that address direct classification of image semantic attributes when those attributes are known a priori. When attributes and phenotypes are not known, the problem is much more arduous and our method can help address this issue.

\subsection{Anomaly Detection}

From the above results in the experiment section with regard to anomaly detection, we remark that anomaly detection based on representation learning from these generative models provides encouraging results. We view ROC AUC as our primary metric for comparison, though Accuracy and F1 Score are also both correlated with ROC AUC. 

In general, CelebA was a difficult dataset for all methods, with the best overall, Inception V3, only achieving an AUC of 0.629, a performance close to InfoStyleGAN with LOF, which had AUC of 0.608. Also, as can be seen in Figure \ref{ROCCurvesFaces}, for regimes with low false positive rates, InfoStyleGAN based methods somewhat outperform the discriminative  methods. From the off diagonals of Figure \ref{facematrix}, one possible reason for this could be that the most apparent features of gender could actually apply to both inliers and outliers. As an example, take the top rightmost image in the predicted outlier/actual inlier and the bottom leftmost image in the predicted inlier/actual outlier. Ignoring the hat, both have thick eyebrows and darkened eyes, with the women having slightly redder lips and a slightly different facial structure.

For Stanford Cars, the generative models actually improve on the discriminative methods, which is fairly surprising given that large vehicles and small vehicles are actually contained in several separate categories in ImageNet. One possible reason is that of domain shift, given some of the images in Figure \ref{carmatrix} are heavily zoomed in, making distinguishing between small and large vehicles difficult from purely the size and forcing features to focus on areas such as the grill and body shape. Similar reasons possibly explain why the dense/global representation performs so much better than the other representations.

\section{Conclusion}

This study is concerned with generative models that address the joint objectives of 
discovery, disentanglement, and control of image semantic attributes, as well as the goal of performing anomaly detection using representation from this model. We describe a method that uses multiscale generative models and maximize mutual information (so called InfoStyleGAN) and achieve those joint goals -- as evaluated both quantitatively and qualitatively. Results show that our method is competitive in two datasets in terms of anomaly detection compared with models trained on significantly larger datasets with multiple diverse classes, giving promising new directions to continue research into generative anomaly detection methods.


\begin{thebibliography}{}

\bibitem [\protect \citeauthoryear {%
Abay%
, Gehly%
, Balage%
, Brown%
\BCBL {}\ \BBA {} Boyce%
}{%
Abay%
\ \protect \BOthers {.}}{%
{\protect \APACyear {2018}}%
}]{%
abay2018maneuver}
\APACinsertmetastar {%
abay2018maneuver}%
\begin{APACrefauthors}%
Abay, R.%
, Gehly, S.%
, Balage, S.%
, Brown, M.%
\BCBL {}\ \BBA {} Boyce, R.%
\end{APACrefauthors}%
\unskip\
\newblock
\APACrefYearMonthDay{2018}{}{}.
\newblock
{\BBOQ}\APACrefatitle {Maneuver detection of space objects using Generative
  Adversarial Networks} {Maneuver detection of space objects using generative
  adversarial networks}.{\BBCQ}
\newblock
\BIn{} \APACrefbtitle {Advanced {M}aui {O}ptical and {S}pace {S}urveilance
  {T}echnologies.} {Advanced {M}aui {O}ptical and {S}pace {S}urveilance
  {T}echnologies.}
\PrintBackRefs{\CurrentBib}

\bibitem [\protect \citeauthoryear {%
Akcay%
, Atapour-Abarghouei%
\BCBL {}\ \BBA {} Breckon%
}{%
Akcay%
\ \protect \BOthers {.}}{%
{\protect \APACyear {2018}}%
}]{%
akcay2018ganomaly}
\APACinsertmetastar {%
akcay2018ganomaly}%
\begin{APACrefauthors}%
Akcay, S.%
, Atapour-Abarghouei, A.%
\BCBL {}\ \BBA {} Breckon, T\BPBI P.%
\end{APACrefauthors}%
\unskip\
\newblock
\APACrefYearMonthDay{2018}{}{}.
\newblock
{\BBOQ}\APACrefatitle {{GAN}omaly: Semi-supervised anomaly detection via
  adversarial training} {{GAN}omaly: Semi-supervised anomaly detection via
  adversarial training}.{\BBCQ}
\newblock
\BIn{} \APACrefbtitle {Asian {C}onference on {C}omputer {V}ision} {Asian
  {C}onference on {C}omputer {V}ision}\ (\BPGS\ 622--637).
\PrintBackRefs{\CurrentBib}

\bibitem [\protect \citeauthoryear {%
Akçay%
, Atapour-Abarghouei%
\BCBL {}\ \BBA {} Breckon%
}{%
Akçay%
\ \protect \BOthers {.}}{%
{\protect \APACyear {2019}}%
}]{%
akccay2019skip}
\APACinsertmetastar {%
akccay2019skip}%
\begin{APACrefauthors}%
Akçay, S.%
, Atapour-Abarghouei, A.%
\BCBL {}\ \BBA {} Breckon, T\BPBI P.%
\end{APACrefauthors}%
\unskip\
\newblock
\APACrefYearMonthDay{2019}{}{}.
\newblock
{\BBOQ}\APACrefatitle {Skip-{GAN}omaly: Skip connected and adversarially
  trained encoder-decoder anomaly Detection} {Skip-{GAN}omaly: Skip connected
  and adversarially trained encoder-decoder anomaly detection}.{\BBCQ}
\newblock
\BIn{} \APACrefbtitle {2019 {IEEE} {I}nternational {J}oint {C}onference on
  {N}eural {N}etworks ({IJCNN})} {2019 {IEEE} {I}nternational {J}oint
  {C}onference on {N}eural {N}etworks ({IJCNN})}\ (\BPG~1-8).
\PrintBackRefs{\CurrentBib}

\bibitem [\protect \citeauthoryear {%
Bachman%
, Hjelm%
\BCBL {}\ \BBA {} Buchwalter%
}{%
Bachman%
\ \protect \BOthers {.}}{%
{\protect \APACyear {2019}}%
}]{%
bachman2019amdim}
\APACinsertmetastar {%
bachman2019amdim}%
\begin{APACrefauthors}%
Bachman, P.%
, Hjelm, R\BPBI D.%
\BCBL {}\ \BBA {} Buchwalter, W.%
\end{APACrefauthors}%
\unskip\
\newblock
\APACrefYearMonthDay{2019}{}{}.
\newblock
{\BBOQ}\APACrefatitle {Learning representations by maximizing mutual
  information Across Views} {Learning representations by maximizing mutual
  information across views}.{\BBCQ}
\newblock
\APACjournalVolNumPages{CoRR}{abs/1906.00910}{}{}.
\PrintBackRefs{\CurrentBib}

\bibitem [\protect \citeauthoryear {%
Bergmann%
, L{\"o}we%
, Fauser%
, Sattlegger%
\BCBL {}\ \BBA {} Steger%
}{%
Bergmann%
\ \protect \BOthers {.}}{%
{\protect \APACyear {2018}}%
}]{%
bergmann2018improving}
\APACinsertmetastar {%
bergmann2018improving}%
\begin{APACrefauthors}%
Bergmann, P.%
, L{\"o}we, S.%
, Fauser, M.%
, Sattlegger, D.%
\BCBL {}\ \BBA {} Steger, C.%
\end{APACrefauthors}%
\unskip\
\newblock
\APACrefYearMonthDay{2018}{}{}.
\newblock
{\BBOQ}\APACrefatitle {Improving unsupervised defect segmentation by applying
  structural similarity to autoencoders} {Improving unsupervised defect
  segmentation by applying structural similarity to autoencoders}.{\BBCQ}
\newblock
\APACjournalVolNumPages{CoRR}{abs/1807.02011}{}{}.
\PrintBackRefs{\CurrentBib}

\bibitem [\protect \citeauthoryear {%
Brock%
, Donahue%
\BCBL {}\ \BBA {} Simonyan%
}{%
Brock%
\ \protect \BOthers {.}}{%
{\protect \APACyear {2019}}%
}]{%
brock2019biggan}
\APACinsertmetastar {%
brock2019biggan}%
\begin{APACrefauthors}%
Brock, A.%
, Donahue, J.%
\BCBL {}\ \BBA {} Simonyan, K.%
\end{APACrefauthors}%
\unskip\
\newblock
\APACrefYearMonthDay{2019}{}{}.
\newblock
{\BBOQ}\APACrefatitle {Large scale {GAN} training for high fidelity natural
  image synthesis} {Large scale {GAN} training for high fidelity natural image
  synthesis}.{\BBCQ}
\newblock
\BIn{} \APACrefbtitle {Proceedings of the {I}nternational {C}onference on
  {L}earning {R}epresentations ({ICLR}).} {Proceedings of the {I}nternational
  {C}onference on {L}earning {R}epresentations ({ICLR}).}
\PrintBackRefs{\CurrentBib}

\bibitem [\protect \citeauthoryear {%
Burlina%
, Joshi%
, Wang%
\BCBL {}\ \protect \BOthers {.}}{%
Burlina%
\ \protect \BOthers {.}}{%
{\protect \APACyear {2019}}%
}]{%
burlina2019s}
\APACinsertmetastar {%
burlina2019s}%
\begin{APACrefauthors}%
Burlina, P.%
, Joshi, N.%
, Wang, I.%
\BCBL {}\ \BOthersPeriod {.}\end{APACrefauthors}%
\unskip\
\newblock
\APACrefYearMonthDay{2019}{}{}.
\newblock
{\BBOQ}\APACrefatitle {Where's {W}ally now? {D}eep generative and
  discriminative embeddings for novelty detection} {Where's {W}ally now? {D}eep
  generative and discriminative embeddings for novelty detection}.{\BBCQ}
\newblock
\BIn{} \APACrefbtitle {Proceedings of the {IEEE} {C}onference on {C}omputer
  {V}ision and {P}attern {R}ecognition} {Proceedings of the {IEEE} {C}onference
  on {C}omputer {V}ision and {P}attern {R}ecognition}\ (\BPGS\ 11507--11516).
\PrintBackRefs{\CurrentBib}

\bibitem [\protect \citeauthoryear {%
T\BPBI Q.~Chen%
, Li%
, Grosse%
\BCBL {}\ \BBA {} Duvenaud%
}{%
T\BPBI Q.~Chen%
\ \protect \BOthers {.}}{%
{\protect \APACyear {2018}}%
}]{%
chen2018isolating}
\APACinsertmetastar {%
chen2018isolating}%
\begin{APACrefauthors}%
Chen, T\BPBI Q.%
, Li, X.%
, Grosse, R\BPBI B.%
\BCBL {}\ \BBA {} Duvenaud, D\BPBI K.%
\end{APACrefauthors}%
\unskip\
\newblock
\APACrefYearMonthDay{2018}{}{}.
\newblock
{\BBOQ}\APACrefatitle {Isolating Sources of Disentanglement in Variational
  Autoencoders} {Isolating sources of disentanglement in variational
  autoencoders}.{\BBCQ}
\newblock
\BIn{} \APACrefbtitle {Advances in {N}eural {I}nformation {P}rocessing
  {S}ystems} {Advances in {N}eural {I}nformation {P}rocessing {S}ystems}\
  (\BPGS\ 2610--2620).
\PrintBackRefs{\CurrentBib}

\bibitem [\protect \citeauthoryear {%
X.~Chen%
\ \protect \BOthers {.}}{%
X.~Chen%
\ \protect \BOthers {.}}{%
{\protect \APACyear {2016}}%
}]{%
chen2016infogan}
\APACinsertmetastar {%
chen2016infogan}%
\begin{APACrefauthors}%
Chen, X.%
, Duan, Y.%
, Houthooft, R.%
, Schulman, J.%
, Sutskever, I.%
\BCBL {}\ \BBA {} Abbeel, P.%
\end{APACrefauthors}%
\unskip\
\newblock
\APACrefYearMonthDay{2016}{}{}.
\newblock
{\BBOQ}\APACrefatitle {Info{GAN}: Interpretable representation learning by
  information maximizing generative adversarial nets} {Info{GAN}: Interpretable
  representation learning by information maximizing generative adversarial
  nets}.{\BBCQ}
\newblock
\BIn{} \APACrefbtitle {Advances in {N}eural {I}nformation {P}rocessing
  {S}ystems} {Advances in {N}eural {I}nformation {P}rocessing {S}ystems}\
  (\BPGS\ 2172--2180).
\PrintBackRefs{\CurrentBib}

\bibitem [\protect \citeauthoryear {%
Deecke%
, Vandermeulen%
, Ruff%
, Mandt%
\BCBL {}\ \BBA {} Kloft%
}{%
Deecke%
\ \protect \BOthers {.}}{%
{\protect \APACyear {2018}}%
}]{%
deecke2018anomaly}
\APACinsertmetastar {%
deecke2018anomaly}%
\begin{APACrefauthors}%
Deecke, L.%
, Vandermeulen, R.%
, Ruff, L.%
, Mandt, S.%
\BCBL {}\ \BBA {} Kloft, M.%
\end{APACrefauthors}%
\unskip\
\newblock
\APACrefYearMonthDay{2018}{}{}.
\newblock
{\BBOQ}\APACrefatitle {Image anomaly detection with generative adversarial
  networks} {Image anomaly detection with generative adversarial
  networks}.{\BBCQ}
\newblock
\APACjournalVolNumPages{{E}uropean {C}onference on {M}achine {L}earning and
  {P}rinciples and {P}ractice of {K}nowledge {D}iscovery in {D}atabases, E{CML}
  {PKDD} 2018}{}{}{}.
\PrintBackRefs{\CurrentBib}

\bibitem [\protect \citeauthoryear {%
Erfani%
, Rajasegarar%
, Karunasekera%
\BCBL {}\ \BBA {} Leckie%
}{%
Erfani%
\ \protect \BOthers {.}}{%
{\protect \APACyear {2016}}%
}]{%
erfani2016high}
\APACinsertmetastar {%
erfani2016high}%
\begin{APACrefauthors}%
Erfani, S\BPBI M.%
, Rajasegarar, S.%
, Karunasekera, S.%
\BCBL {}\ \BBA {} Leckie, C.%
\end{APACrefauthors}%
\unskip\
\newblock
\APACrefYearMonthDay{2016}{}{}.
\newblock
{\BBOQ}\APACrefatitle {High-dimensional and large-scale anomaly detection using
  a linear one-class SVM with deep learning} {High-dimensional and large-scale
  anomaly detection using a linear one-class svm with deep learning}.{\BBCQ}
\newblock
\APACjournalVolNumPages{Pattern {R}ecognition}{58}{}{121--134}.
\PrintBackRefs{\CurrentBib}

\bibitem [\protect \citeauthoryear {%
Goodfellow%
\ \protect \BOthers {.}}{%
Goodfellow%
\ \protect \BOthers {.}}{%
{\protect \APACyear {2014}}%
}]{%
goodfellow2014generative}
\APACinsertmetastar {%
goodfellow2014generative}%
\begin{APACrefauthors}%
Goodfellow, I.%
, Pouget-Abadie, J.%
, Mirza, M.%
, Xu, B.%
, Warde-Farley, D.%
, Ozair, S.%
\BDBL {}Bengio, Y.%
\end{APACrefauthors}%
\unskip\
\newblock
\APACrefYearMonthDay{2014}{}{}.
\newblock
{\BBOQ}\APACrefatitle {Generative Adversarial Nets} {Generative adversarial
  nets}.{\BBCQ}
\newblock
\BIn{} \APACrefbtitle {Advances in {N}eural {I}nformation {P}rocessing
  {S}ystems} {Advances in {N}eural {I}nformation {P}rocessing {S}ystems}\
  (\BPGS\ 2672--2680).
\PrintBackRefs{\CurrentBib}

\bibitem [\protect \citeauthoryear {%
Gray%
, Smolyak%
, Badirli%
\BCBL {}\ \BBA {} Mohler%
}{%
Gray%
\ \protect \BOthers {.}}{%
{\protect \APACyear {2020}}%
}]{%
gray2018coupled}
\APACinsertmetastar {%
gray2018coupled}%
\begin{APACrefauthors}%
Gray, K.%
, Smolyak, D.%
, Badirli, S.%
\BCBL {}\ \BBA {} Mohler, G.%
\end{APACrefauthors}%
\unskip\
\newblock
\APACrefYearMonthDay{2020}{}{}.
\newblock
{\BBOQ}\APACrefatitle {Coupled {IGMM}-{GAN}s for deep multimodal anomaly
  detection in human mobility data} {Coupled {IGMM}-{GAN}s for deep multimodal
  anomaly detection in human mobility data}.{\BBCQ}
\newblock
\APACjournalVolNumPages{A{CM} {T}ransactions on {S}patial {A}lgorithms and
  {S}ystems}{}{}{}.
\PrintBackRefs{\CurrentBib}

\bibitem [\protect \citeauthoryear {%
Grover%
, Dhar%
\BCBL {}\ \BBA {} Ermon%
}{%
Grover%
\ \protect \BOthers {.}}{%
{\protect \APACyear {2018}}%
}]{%
grover2018flow}
\APACinsertmetastar {%
grover2018flow}%
\begin{APACrefauthors}%
Grover, A.%
, Dhar, M.%
\BCBL {}\ \BBA {} Ermon, S.%
\end{APACrefauthors}%
\unskip\
\newblock
\APACrefYearMonthDay{2018}{}{}.
\newblock
{\BBOQ}\APACrefatitle {Flow-{GAN}: {C}ombining Maximum Likelihood and
  Adversarial Learning in Generative Models} {Flow-{GAN}: {C}ombining maximum
  likelihood and adversarial learning in generative models}.{\BBCQ}
\newblock
\BIn{} \APACrefbtitle {Thirty-{S}econd {AAAI} {C}onference on {A}rtificial
  {I}ntelligence.} {Thirty-{S}econd {AAAI} {C}onference on {A}rtificial
  {I}ntelligence.}
\PrintBackRefs{\CurrentBib}

\bibitem [\protect \citeauthoryear {%
Harkonen%
, Hertzmann%
, Lehtinen%
\BCBL {}\ \BBA {} Paris%
}{%
Harkonen%
\ \protect \BOthers {.}}{%
{\protect \APACyear {2020}}%
}]{%
harkonen2020ganspace}
\APACinsertmetastar {%
harkonen2020ganspace}%
\begin{APACrefauthors}%
Harkonen, E.%
, Hertzmann, A.%
, Lehtinen, J.%
\BCBL {}\ \BBA {} Paris, S.%
\end{APACrefauthors}%
\unskip\
\newblock
\APACrefYearMonthDay{2020}{}{}.
\newblock
{\BBOQ}\APACrefatitle {{GANSpace: {D}iscovering interpretable {GAN} controls}}
  {{GANSpace: {D}iscovering interpretable {GAN} controls}}.{\BBCQ}
\newblock
\APACjournalVolNumPages{CoRR}{abs/2004.02546}{}{}.
\PrintBackRefs{\CurrentBib}

\bibitem [\protect \citeauthoryear {%
Heusel%
, Ramsauer%
, Unterthiner%
, Nessler%
\BCBL {}\ \BBA {} Hochreiter%
}{%
Heusel%
\ \protect \BOthers {.}}{%
{\protect \APACyear {2017}}%
}]{%
heusel2017gans}
\APACinsertmetastar {%
heusel2017gans}%
\begin{APACrefauthors}%
Heusel, M.%
, Ramsauer, H.%
, Unterthiner, T.%
, Nessler, B.%
\BCBL {}\ \BBA {} Hochreiter, S.%
\end{APACrefauthors}%
\unskip\
\newblock
\APACrefYearMonthDay{2017}{}{}.
\newblock
{\BBOQ}\APACrefatitle {G{AN}s Trained by a Two Time-Scale Update Rule Converge
  to a Local {N}ash Equilibrium} {G{AN}s trained by a two time-scale update
  rule converge to a local {N}ash equilibrium}.{\BBCQ}
\newblock
\BIn{} \APACrefbtitle {Advances in {N}eural {I}nformation {P}rocessing
  {S}ystems} {Advances in {N}eural {I}nformation {P}rocessing {S}ystems}\
  (\BPGS\ 6626--6637).
\PrintBackRefs{\CurrentBib}

\bibitem [\protect \citeauthoryear {%
Jain%
, Manikonda%
, Hernandez%
, Sengupta%
\BCBL {}\ \BBA {} Kambhampati%
}{%
Jain%
\ \protect \BOthers {.}}{%
{\protect \APACyear {2018}}%
}]{%
jain2018imagining}
\APACinsertmetastar {%
jain2018imagining}%
\begin{APACrefauthors}%
Jain, N.%
, Manikonda, L.%
, Hernandez, A\BPBI O.%
, Sengupta, S.%
\BCBL {}\ \BBA {} Kambhampati, S.%
\end{APACrefauthors}%
\unskip\
\newblock
\APACrefYearMonthDay{2018}{}{}.
\newblock
{\BBOQ}\APACrefatitle {Imagining an engineer: On {GAN}-based data augmentation
  perpetuating biases} {Imagining an engineer: On {GAN}-based data augmentation
  perpetuating biases}.{\BBCQ}
\newblock
\APACjournalVolNumPages{CoRR}{abs/1811.03751}{}{}.
\PrintBackRefs{\CurrentBib}

\bibitem [\protect \citeauthoryear {%
Karras%
, Aila%
, Laine%
\BCBL {}\ \BBA {} Lehtinen%
}{%
Karras%
\ \protect \BOthers {.}}{%
{\protect \APACyear {2018}}%
}]{%
karras2017progressive}
\APACinsertmetastar {%
karras2017progressive}%
\begin{APACrefauthors}%
Karras, T.%
, Aila, T.%
, Laine, S.%
\BCBL {}\ \BBA {} Lehtinen, J.%
\end{APACrefauthors}%
\unskip\
\newblock
\APACrefYearMonthDay{2018}{}{}.
\newblock
{\BBOQ}\APACrefatitle {Progressive growing of GANs for improved quality,
  stability, and variation} {Progressive growing of gans for improved quality,
  stability, and variation}.{\BBCQ}
\newblock
\BIn{} \APACrefbtitle {International {C}onference on {L}earning
  {R}epresentations ({ICLR}).} {International {C}onference on {L}earning
  {R}epresentations ({ICLR}).}
\PrintBackRefs{\CurrentBib}

\bibitem [\protect \citeauthoryear {%
Karras%
, Laine%
\BCBL {}\ \BBA {} Aila%
}{%
Karras%
\ \protect \BOthers {.}}{%
{\protect \APACyear {2019}}%
}]{%
Karras2019stylegan}
\APACinsertmetastar {%
Karras2019stylegan}%
\begin{APACrefauthors}%
Karras, T.%
, Laine, S.%
\BCBL {}\ \BBA {} Aila, T.%
\end{APACrefauthors}%
\unskip\
\newblock
\APACrefYearMonthDay{2019}{}{}.
\newblock
{\BBOQ}\APACrefatitle {A style-based generator architecture for generative
  adversarial networks} {A style-based generator architecture for generative
  adversarial networks}.{\BBCQ}
\newblock
\BIn{} \APACrefbtitle {{IEEE} {C}onference on {C}omputer {V}ision and {P}attern
  {R}ecognition ({CVPR}).} {{IEEE} {C}onference on {C}omputer {V}ision and
  {P}attern {R}ecognition ({CVPR}).}
\PrintBackRefs{\CurrentBib}

\bibitem [\protect \citeauthoryear {%
Karras%
\ \protect \BOthers {.}}{%
Karras%
\ \protect \BOthers {.}}{%
{\protect \APACyear {2020}}%
}]{%
Karras2019stylegan2}
\APACinsertmetastar {%
Karras2019stylegan2}%
\begin{APACrefauthors}%
Karras, T.%
, Laine, S.%
, Aittala, M.%
, Hellsten, J.%
, Lehtinen, J.%
\BCBL {}\ \BBA {} Aila, T.%
\end{APACrefauthors}%
\unskip\
\newblock
\APACrefYearMonthDay{2020}{}{}.
\newblock
{\BBOQ}\APACrefatitle {Analyzing and improving the image quality of {StyleGAN}}
  {Analyzing and improving the image quality of {StyleGAN}}.{\BBCQ}
\newblock
\BIn{} \APACrefbtitle {{IEEE} {C}onference on {C}omputer {V}ision and {P}attern
  {R}ecognition ({CVPR}).} {{IEEE} {C}onference on {C}omputer {V}ision and
  {P}attern {R}ecognition ({CVPR}).}
\PrintBackRefs{\CurrentBib}

\bibitem [\protect \citeauthoryear {%
Kimura%
\ \BBA {} Yanagihara%
}{%
Kimura%
\ \BBA {} Yanagihara%
}{%
{\protect \APACyear {2018}}%
}]{%
kimura2018semi}
\APACinsertmetastar {%
kimura2018semi}%
\begin{APACrefauthors}%
Kimura, M.%
\BCBT {}\ \BBA {} Yanagihara, T.%
\end{APACrefauthors}%
\unskip\
\newblock
\APACrefYearMonthDay{2018}{}{}.
\newblock
{\BBOQ}\APACrefatitle {Semi-supervised anomaly detection Using GANs for visual
  inspection in noisy training data} {Semi-supervised anomaly detection using
  gans for visual inspection in noisy training data}.{\BBCQ}
\newblock
\APACjournalVolNumPages{CoRR}{abs/1807.01136}{}{}.
\PrintBackRefs{\CurrentBib}

\bibitem [\protect \citeauthoryear {%
Kingma%
\ \BBA {} Dhariwal%
}{%
Kingma%
\ \BBA {} Dhariwal%
}{%
{\protect \APACyear {2018}}%
}]{%
kingma2018glow}
\APACinsertmetastar {%
kingma2018glow}%
\begin{APACrefauthors}%
Kingma, D\BPBI P.%
\BCBT {}\ \BBA {} Dhariwal, P.%
\end{APACrefauthors}%
\unskip\
\newblock
\APACrefYearMonthDay{2018}{}{}.
\newblock
{\BBOQ}\APACrefatitle {Glow: {G}enerative flow with invertible 1x1
  convolutions} {Glow: {G}enerative flow with invertible 1x1
  convolutions}.{\BBCQ}
\newblock
\BIn{} \APACrefbtitle {Advances in {N}eural {I}nformation {P}rocessing
  {S}ystems} {Advances in {N}eural {I}nformation {P}rocessing {S}ystems}\
  (\BPGS\ 10215--10224).
\PrintBackRefs{\CurrentBib}

\bibitem [\protect \citeauthoryear {%
Kingma%
\ \BBA {} Welling%
}{%
Kingma%
\ \BBA {} Welling%
}{%
{\protect \APACyear {2013}}%
}]{%
kingma2013auto}
\APACinsertmetastar {%
kingma2013auto}%
\begin{APACrefauthors}%
Kingma, D\BPBI P.%
\BCBT {}\ \BBA {} Welling, M.%
\end{APACrefauthors}%
\unskip\
\newblock
\APACrefYearMonthDay{2013}{}{}.
\newblock
{\BBOQ}\APACrefatitle {Auto-encoding variational {B}ayes} {Auto-encoding
  variational {B}ayes}.{\BBCQ}
\newblock
\APACjournalVolNumPages{CoRR}{abs/1312.6114}{}{}.
\PrintBackRefs{\CurrentBib}

\bibitem [\protect \citeauthoryear {%
Krause%
, Stark%
, Deng%
\BCBL {}\ \BBA {} Fei-Fei%
}{%
Krause%
\ \protect \BOthers {.}}{%
{\protect \APACyear {2013}}%
}]{%
KrauseStarkDengFei-Fei_3DRR2013}
\APACinsertmetastar {%
KrauseStarkDengFei-Fei_3DRR2013}%
\begin{APACrefauthors}%
Krause, J.%
, Stark, M.%
, Deng, J.%
\BCBL {}\ \BBA {} Fei-Fei, L.%
\end{APACrefauthors}%
\unskip\
\newblock
\APACrefYearMonthDay{2013}{}{}.
\newblock
{\BBOQ}\APACrefatitle {3{D} Object Representations for Fine-Grained
  Categorization} {3{D} object representations for fine-grained
  categorization}.{\BBCQ}
\newblock
\BIn{} \APACrefbtitle {4th {I}nternational {IEEE} {W}orkshop on 3{D}
  {R}epresentation and {R}ecognition (3d{RR}-13).} {4th {I}nternational {IEEE}
  {W}orkshop on 3{D} {R}epresentation and {R}ecognition (3d{RR}-13).}
\newblock
\APACaddressPublisher{Sydney, Australia}{}.
\PrintBackRefs{\CurrentBib}

\bibitem [\protect \citeauthoryear {%
Kynkäänniemi%
, Karras%
, Laine%
, Lehtinen%
\BCBL {}\ \BBA {} Aila%
}{%
Kynkäänniemi%
\ \protect \BOthers {.}}{%
{\protect \APACyear {2019}}%
}]{%
kynk2019precision}
\APACinsertmetastar {%
kynk2019precision}%
\begin{APACrefauthors}%
Kynkäänniemi, T.%
, Karras, T.%
, Laine, S.%
, Lehtinen, J.%
\BCBL {}\ \BBA {} Aila, T.%
\end{APACrefauthors}%
\unskip\
\newblock
\APACrefYearMonthDay{2019}{}{}.
\newblock
{\BBOQ}\APACrefatitle {Improved precision and recall metric for assessing
  generative models} {Improved precision and recall metric for assessing
  generative models}.{\BBCQ}
\newblock
\BIn{} \APACrefbtitle {Conference on {N}eural {I}nformation {P}rocessing
  {S}ystems.} {Conference on {N}eural {I}nformation {P}rocessing {S}ystems.}
\PrintBackRefs{\CurrentBib}

\bibitem [\protect \citeauthoryear {%
Lai%
, Hu%
, Tsai%
\BCBL {}\ \BBA {} Chiu%
}{%
Lai%
\ \protect \BOthers {.}}{%
{\protect \APACyear {2018}}%
}]{%
lai2018industrial}
\APACinsertmetastar {%
lai2018industrial}%
\begin{APACrefauthors}%
Lai, Y.%
, Hu, J.%
, Tsai, Y.%
\BCBL {}\ \BBA {} Chiu, W.%
\end{APACrefauthors}%
\unskip\
\newblock
\APACrefYearMonthDay{2018}{}{}.
\newblock
{\BBOQ}\APACrefatitle {Industrial anomaly detection and one-class
  classification using generative adversarial networks} {Industrial anomaly
  detection and one-class classification using generative adversarial
  networks}.{\BBCQ}
\newblock
\BIn{} \APACrefbtitle {2018 {IEEE}/{ASME} {I}nternational {C}onference on
  {A}dvanced {I}ntelligent {M}echatronics ({AIM})} {2018 {IEEE}/{ASME}
  {I}nternational {C}onference on {A}dvanced {I}ntelligent {M}echatronics
  ({AIM})}\ (\BPGS\ 1444--1449).
\PrintBackRefs{\CurrentBib}

\bibitem [\protect \citeauthoryear {%
Lin%
\ \protect \BOthers {.}}{%
Lin%
\ \protect \BOthers {.}}{%
{\protect \APACyear {2019}}%
}]{%
lin2019cocogan}
\APACinsertmetastar {%
lin2019cocogan}%
\begin{APACrefauthors}%
Lin, C\BPBI H.%
, Chang, C.%
, Chen, Y.%
, Juan, D.%
, Wei, W.%
\BCBL {}\ \BBA {} Chen, H.%
\end{APACrefauthors}%
\unskip\
\newblock
\APACrefYearMonthDay{2019}{}{}.
\newblock
{\BBOQ}\APACrefatitle {{COCO-GAN:} Generation by Parts via Conditional
  Coordinating} {{COCO-GAN:} generation by parts via conditional
  coordinating}.{\BBCQ}
\newblock
\BIn{} \APACrefbtitle {I{EEE} {I}nternational {C}onference on {C}omputer
  {V}ision ({ICCV}).} {I{EEE} {I}nternational {C}onference on {C}omputer
  {V}ision ({ICCV}).}
\PrintBackRefs{\CurrentBib}

\bibitem [\protect \citeauthoryear {%
Litjens%
\ \protect \BOthers {.}}{%
Litjens%
\ \protect \BOthers {.}}{%
{\protect \APACyear {2017}}%
}]{%
litjens2017survey}
\APACinsertmetastar {%
litjens2017survey}%
\begin{APACrefauthors}%
Litjens, G.%
, Kooi, T.%
, Bejnordi, B\BPBI E.%
, Setio, A\BPBI A\BPBI A.%
, Ciompi, F.%
, Ghafoorian, M.%
\BDBL {}S{\'a}nchez, C\BPBI I.%
\end{APACrefauthors}%
\unskip\
\newblock
\APACrefYearMonthDay{2017}{}{}.
\newblock
{\BBOQ}\APACrefatitle {A survey on deep learning in medical image analysis} {A
  survey on deep learning in medical image analysis}.{\BBCQ}
\newblock
\APACjournalVolNumPages{Medical {I}mage {A}nalysis}{42}{}{60--88}.
\PrintBackRefs{\CurrentBib}

\bibitem [\protect \citeauthoryear {%
Y.~Liu%
\ \protect \BOthers {.}}{%
Y.~Liu%
\ \protect \BOthers {.}}{%
{\protect \APACyear {2018}}%
}]{%
liu2018generative}
\APACinsertmetastar {%
liu2018generative}%
\begin{APACrefauthors}%
Liu, Y.%
, Li, Z.%
, Zhou, C.%
, Jiang, Y.%
, Sun, J.%
, Wang, M.%
\BCBL {}\ \BBA {} He, X.%
\end{APACrefauthors}%
\unskip\
\newblock
\APACrefYearMonthDay{2018}{}{}.
\newblock
{\BBOQ}\APACrefatitle {Generative adversarial active learning for unsupervised
  outlier detection} {Generative adversarial active learning for unsupervised
  outlier detection}.{\BBCQ}
\newblock
\APACjournalVolNumPages{CoRR}{abs/1809.10816}{}{}.
\PrintBackRefs{\CurrentBib}

\bibitem [\protect \citeauthoryear {%
Z.~Liu%
, Luo%
, Wang%
\BCBL {}\ \BBA {} Tang%
}{%
Z.~Liu%
\ \protect \BOthers {.}}{%
{\protect \APACyear {2015}}%
}]{%
liu2015faceattributes}
\APACinsertmetastar {%
liu2015faceattributes}%
\begin{APACrefauthors}%
Liu, Z.%
, Luo, P.%
, Wang, X.%
\BCBL {}\ \BBA {} Tang, X.%
\end{APACrefauthors}%
\unskip\
\newblock
\APACrefYearMonthDay{2015}{}{}.
\newblock
{\BBOQ}\APACrefatitle {Deep learning face attributes in the wild} {Deep
  learning face attributes in the wild}.{\BBCQ}
\newblock
\BIn{} \APACrefbtitle {Proceedings of {I}nternational {C}onference on
  {C}omputer {V}ision ({ICCV}).} {Proceedings of {I}nternational {C}onference
  on {C}omputer {V}ision ({ICCV}).}
\PrintBackRefs{\CurrentBib}

\bibitem [\protect \citeauthoryear {%
Lucic%
, Kurach%
, Michalski%
, Gelly%
\BCBL {}\ \BBA {} Bousquet%
}{%
Lucic%
\ \protect \BOthers {.}}{%
{\protect \APACyear {2018}}%
}]{%
1711.10337}
\APACinsertmetastar {%
1711.10337}%
\begin{APACrefauthors}%
Lucic, M.%
, Kurach, K.%
, Michalski, M.%
, Gelly, S.%
\BCBL {}\ \BBA {} Bousquet, O.%
\end{APACrefauthors}%
\unskip\
\newblock
\APACrefYearMonthDay{2018}{}{}.
\newblock
{\BBOQ}\APACrefatitle {Are {GAN}s created equal? {A} large-scale study} {Are
  {GAN}s created equal? {A} large-scale study}.{\BBCQ}
\newblock
\BIn{} \APACrefbtitle {Conference on {N}eural {I}nformation {P}rocessing
  {S}ystems.} {Conference on {N}eural {I}nformation {P}rocessing {S}ystems.}
\PrintBackRefs{\CurrentBib}

\bibitem [\protect \citeauthoryear {%
Naphade%
\ \protect \BOthers {.}}{%
Naphade%
\ \protect \BOthers {.}}{%
{\protect \APACyear {2018}}%
}]{%
naphade20182018}
\APACinsertmetastar {%
naphade20182018}%
\begin{APACrefauthors}%
Naphade, M.%
, Chang, M\BHBI C.%
, Sharma, A.%
, Anastasiu, D\BPBI C.%
, Jagarlamudi, V.%
, Chakraborty, P.%
\BDBL {}Chellappa, R\BPBI e\BPBI a.%
\end{APACrefauthors}%
\unskip\
\newblock
\APACrefYearMonthDay{2018}{}{}.
\newblock
{\BBOQ}\APACrefatitle {The 2018 {NVIDIA} {AI} city challenge} {The 2018
  {NVIDIA} {AI} city challenge}.{\BBCQ}
\newblock
\APACjournalVolNumPages{Proceedings of the {IEEE} {C}onference on {C}omputer
  {V}ision and {P}attern {R}ecognition {W}orkshop}{}{}{53--60}.
\PrintBackRefs{\CurrentBib}

\bibitem [\protect \citeauthoryear {%
{Nie}%
\ \protect \BOthers {.}}{%
{Nie}%
\ \protect \BOthers {.}}{%
{\protect \APACyear {2020}}%
}]{%
weili2020semistyle}
\APACinsertmetastar {%
weili2020semistyle}%
\begin{APACrefauthors}%
{Nie}, W.%
, {Karras}, T.%
, {Garg}, A.%
, {Debnath}, S.%
, {Patney}, A.%
, {Patel}, A\BPBI B.%
\BCBL {}\ \BBA {} {Anand Kumar}, A.%
\end{APACrefauthors}%
\unskip\
\newblock
\APACrefYearMonthDay{2020}{}{}.
\newblock
{\BBOQ}\APACrefatitle {{Semi-supervised StyleGAN for disentanglement learning}}
  {{Semi-supervised StyleGAN for disentanglement learning}}.{\BBCQ}
\newblock
\APACjournalVolNumPages{Proceedings of the {I}nternational {C}onference of
  {M}achine {L}earning}{}{}{}.
\PrintBackRefs{\CurrentBib}

\bibitem [\protect \citeauthoryear {%
Oord%
\ \protect \BOthers {.}}{%
Oord%
\ \protect \BOthers {.}}{%
{\protect \APACyear {2016}}%
}]{%
oord2016wavenet}
\APACinsertmetastar {%
oord2016wavenet}%
\begin{APACrefauthors}%
Oord, A\BPBI v\BPBI d.%
, Dieleman, S.%
, Zen, H.%
, Simonyan, K.%
, Vinyals, O.%
, Graves, A.%
\BDBL {}Kavukcuoglu, K.%
\end{APACrefauthors}%
\unskip\
\newblock
\APACrefYearMonthDay{2016}{}{}.
\newblock
{\BBOQ}\APACrefatitle {Wavenet: A generative model for raw audio} {Wavenet: A
  generative model for raw audio}.{\BBCQ}
\newblock
\APACjournalVolNumPages{CoRR}{abs/1609.03499}{}{}.
\PrintBackRefs{\CurrentBib}

\bibitem [\protect \citeauthoryear {%
Poole%
, Ozair%
, van~den Oord%
, Alemi%
\BCBL {}\ \BBA {} Tucker%
}{%
Poole%
\ \protect \BOthers {.}}{%
{\protect \APACyear {2019}}%
}]{%
poole2019bounds}
\APACinsertmetastar {%
poole2019bounds}%
\begin{APACrefauthors}%
Poole, B.%
, Ozair, S.%
, van~den Oord, A.%
, Alemi, A.%
\BCBL {}\ \BBA {} Tucker, G.%
\end{APACrefauthors}%
\unskip\
\newblock
\APACrefYearMonthDay{2019}{}{}.
\newblock
{\BBOQ}\APACrefatitle {On Variational bounds of mutual information} {On
  variational bounds of mutual information}.{\BBCQ}
\newblock
\BIn{} \APACrefbtitle {Proceedings of the 36th {I}nternational {C}onference on
  {M}achine {L}earning.} {Proceedings of the 36th {I}nternational {C}onference
  on {M}achine {L}earning.}
\PrintBackRefs{\CurrentBib}

\bibitem [\protect \citeauthoryear {%
Russakovsky%
\ \protect \BOthers {.}}{%
Russakovsky%
\ \protect \BOthers {.}}{%
{\protect \APACyear {2015}}%
}]{%
ILSVRC15}
\APACinsertmetastar {%
ILSVRC15}%
\begin{APACrefauthors}%
Russakovsky, O.%
, Deng, J.%
, Su, H.%
, Krause, J.%
, Satheesh, S.%
, Ma, S.%
\BDBL {}Fei-Fei, L.%
\end{APACrefauthors}%
\unskip\
\newblock
\APACrefYearMonthDay{2015}{}{}.
\newblock
{\BBOQ}\APACrefatitle {{ImageNet large scale visual recognition challenge}}
  {{ImageNet large scale visual recognition challenge}}.{\BBCQ}
\newblock
\APACjournalVolNumPages{International {J}ournal of {C}omputer {V}ision
  ({IJCV})}{115}{3}{211-252}.
\newblock
\begin{APACrefDOI} \doi{10.1007/s11263-015-0816-y} \end{APACrefDOI}
\PrintBackRefs{\CurrentBib}

\bibitem [\protect \citeauthoryear {%
Salimans%
\ \protect \BOthers {.}}{%
Salimans%
\ \protect \BOthers {.}}{%
{\protect \APACyear {2016}}%
}]{%
salimans2016improved}
\APACinsertmetastar {%
salimans2016improved}%
\begin{APACrefauthors}%
Salimans, T.%
, Goodfellow, I.%
, Zaremba, W.%
, Cheung, V.%
, Radford, A.%
\BCBL {}\ \BBA {} Chen, X.%
\end{APACrefauthors}%
\unskip\
\newblock
\APACrefYearMonthDay{2016}{}{}.
\newblock
{\BBOQ}\APACrefatitle {Improved Techniques for Training {GAN}s} {Improved
  techniques for training {GAN}s}.{\BBCQ}
\newblock
\BIn{} \APACrefbtitle {Advances in {N}eural {I}nformation {P}rocessing
  {S}ystems} {Advances in {N}eural {I}nformation {P}rocessing {S}ystems}\
  (\BPGS\ 2234--2242).
\PrintBackRefs{\CurrentBib}

\bibitem [\protect \citeauthoryear {%
Schlegl%
, Seeb{\"o}ck%
, Waldstein%
, Schmidt-Erfurth%
\BCBL {}\ \BBA {} Langs%
}{%
Schlegl%
\ \protect \BOthers {.}}{%
{\protect \APACyear {2017}}%
}]{%
schlegl2017unsupervised}
\APACinsertmetastar {%
schlegl2017unsupervised}%
\begin{APACrefauthors}%
Schlegl, T.%
, Seeb{\"o}ck, P.%
, Waldstein, S\BPBI M.%
, Schmidt-Erfurth, U.%
\BCBL {}\ \BBA {} Langs, G.%
\end{APACrefauthors}%
\unskip\
\newblock
\APACrefYearMonthDay{2017}{}{}.
\newblock
{\BBOQ}\APACrefatitle {Unsupervised anomaly detection with generative
  adversarial networks to guide marker discovery} {Unsupervised anomaly
  detection with generative adversarial networks to guide marker
  discovery}.{\BBCQ}
\newblock
\BIn{} \APACrefbtitle {Proceedings of the {I}nternational {C}onference on
  {I}nformation {P}rocessing in {M}edical {I}maging} {Proceedings of the
  {I}nternational {C}onference on {I}nformation {P}rocessing in {M}edical
  {I}maging}\ (\BPGS\ 146--157).
\PrintBackRefs{\CurrentBib}

\bibitem [\protect \citeauthoryear {%
Shen%
\ \BBA {} Zhou%
}{%
Shen%
\ \BBA {} Zhou%
}{%
{\protect \APACyear {2020}}%
}]{%
shen2020cffac}
\APACinsertmetastar {%
shen2020cffac}%
\begin{APACrefauthors}%
Shen, Y.%
\BCBT {}\ \BBA {} Zhou, B.%
\end{APACrefauthors}%
\unskip\
\newblock
\APACrefYearMonthDay{2020}{}{}.
\newblock
{\BBOQ}\APACrefatitle {Closed-Form Factorization of Latent Semantics in GANs}
  {Closed-form factorization of latent semantics in gans}.{\BBCQ}
\newblock
\APACjournalVolNumPages{CoRR}{abs/2007.06600}{}{}.
\PrintBackRefs{\CurrentBib}

\bibitem [\protect \citeauthoryear {%
Tewari%
\ \protect \BOthers {.}}{%
Tewari%
\ \protect \BOthers {.}}{%
{\protect \APACyear {2020}}%
}]{%
tewari2020stylerig}
\APACinsertmetastar {%
tewari2020stylerig}%
\begin{APACrefauthors}%
Tewari, A.%
, Elgharib, M.%
, Bharaj, G.%
, Bernard, F.%
, Seidel, H\BHBI P.%
, P{\'e}rez, P.%
\BDBL {}Theobalt, C.%
\end{APACrefauthors}%
\unskip\
\newblock
\APACrefYearMonthDay{2020}{}{}.
\newblock
{\BBOQ}\APACrefatitle {Style{R}ig: {R}igging Style{GAN} for 3{D} Control over
  Portrait Images} {Style{R}ig: {R}igging style{GAN} for 3{D} control over
  portrait images}.{\BBCQ}
\newblock
\BIn{} \APACrefbtitle {{IEEE} {C}onference on {C}omputer {V}ision and {P}attern
  {R}ecognition ({CVPR}).} {{IEEE} {C}onference on {C}omputer {V}ision and
  {P}attern {R}ecognition ({CVPR}).}
\PrintBackRefs{\CurrentBib}

\bibitem [\protect \citeauthoryear {%
Zenati%
, Foo%
, Lecouat%
, Manek%
\BCBL {}\ \BBA {} Chandrasekhar%
}{%
Zenati%
\ \protect \BOthers {.}}{%
{\protect \APACyear {2018}}%
}]{%
efficient}
\APACinsertmetastar {%
efficient}%
\begin{APACrefauthors}%
Zenati, H.%
, Foo, C\BPBI S.%
, Lecouat, B.%
, Manek, G.%
\BCBL {}\ \BBA {} Chandrasekhar, V\BPBI R.%
\end{APACrefauthors}%
\unskip\
\newblock
\APACrefYearMonthDay{2018}{}{}.
\newblock
{\BBOQ}\APACrefatitle {Efficient {GAN}-based anomaly detection} {Efficient
  {GAN}-based anomaly detection}.{\BBCQ}
\newblock
\APACjournalVolNumPages{CoRR}{abs/1802.06222}{}{}.
\PrintBackRefs{\CurrentBib}

\bibitem [\protect \citeauthoryear {%
Zhang%
, Goodfellow%
, Metaxas%
\BCBL {}\ \BBA {} Odena%
}{%
Zhang%
\ \protect \BOthers {.}}{%
{\protect \APACyear {2019}}%
}]{%
zhang2019sagan}
\APACinsertmetastar {%
zhang2019sagan}%
\begin{APACrefauthors}%
Zhang, H.%
, Goodfellow, I.%
, Metaxas, D.%
\BCBL {}\ \BBA {} Odena, A.%
\end{APACrefauthors}%
\unskip\
\newblock
\APACrefYearMonthDay{2019}{}{}.
\newblock
{\BBOQ}\APACrefatitle {Self-Attention generative adversarial networks}
  {Self-attention generative adversarial networks}.{\BBCQ}
\newblock
\BIn{} \APACrefbtitle {Proceedings of the 36th {I}nternational {C}onference on
  {M}achine {L}earning.} {Proceedings of the 36th {I}nternational {C}onference
  on {M}achine {L}earning.}
\PrintBackRefs{\CurrentBib}

\end{thebibliography}
\end{document}